\documentclass[10pt]{article} 

\usepackage[preprint]{rlj} 

\usepackage{amssymb}            
\usepackage{mathtools}          
\usepackage{mathrsfs}           
\usepackage{graphicx}           
\usepackage{subcaption}         
\usepackage[space]{grffile}     
\usepackage{url}                
\usepackage{lipsum}             

\usepackage{amsthm}
\usepackage[ruled,vlined,linesnumbered]{algorithm2e}

\usepackage{bbm,bm}

\usepackage{xspace}
\usepackage{dsfont}
\usepackage[textsize=small]{todonotes}
\usepackage{mathrsfs}
\usepackage{longtable}
\usepackage{xpatch}
\usepackage{collect}

\usepackage{cleveref}
\usepackage[compact]{titlesec}

\usepackage{booktabs}
\usepackage{multirow}
\makeatletter
\renewcommand{\todo}[2][]{\@todo[#1]{#2}}
\makeatother

\newcommand{\ip}[1]{\left\langle #1 \right\rangle}
\newcommand{\norm}[1]{\left\lVert #1 \right\rVert}
\newcommand{\R}{\mathbb{R}}

\DeclareMathOperator*{\argmax}{argmax}

\newcommand{\set}[1]{\left\{#1\right\}}
\newcommand{\E}{\mathbb E}

\newcommand{\cF}{\mathcal{F}}
\newcommand{\EE}[1]{\E\left[#1\right]}

\newcommand{\N}{\mathbb N}

\newtheorem{proposition}{Proposition}
\newtheorem{remark}{Remark}
\newtheorem{theorem}{Theorem}

\newtheorem{lemma}{Lemma}

\newtheorem{assumption}{Assumption}

\crefname{theorem}{Theorem}{Theorems}
\crefname{lemma}{Lemma}{Lemmas}
\crefname{corollary}{Corollary}{Corollaries}
\crefname{assumption}{Assumption}{Assumptions}
\crefname{proposition}{Proposition}{Propositions}
\crefname{definition}{Definition}{Definitions}
\crefname{example}{Example}{Examples}
\crefname{remark}{Remark}{Remarks}

\newcommand{\PP}{\mathbb{P}}

\newcommand{\cA}{\mathcal{A}}

\newcommand{\cE}{\mathcal{E}}

\newcommand{\cC}{\mathcal{C}}
\newcommand{\cU}{\mathcal{U}}

\newcommand{\cL}{\mathcal{L}}
\newcommand{\cD}{\mathcal{D}}
\newcommand{\cM}{\mathcal{M}}

\DeclareMathOperator{\tr}{tr}

\newcommand{\ucb}{\text{UCB}}
\newcommand{\lcb}{\text{LCB}}

\newcommand{\alg}{\text{LinOtO}\xspace}
\newcommand{\betaoff}{\beta_{\text{off}}}
\newcommand{\lcbaoff}{L_{\text{off}}}
\newcommand{\muoff}{\mu_{\text{off}}}

\title{Offline-to-Online Learning in Linear Bandits}

\setrunningtitle{Offline-to-Online Learning in Linear Bandits}

\author{Kushagra Chandak, Toshinori Kitamura, Xiaoqi Tan}

\emails{\{kchandak,toshinor,xiaoqi.tan\}@ualberta.ca}

\affiliations{
\textbf{Department of Computing Science, University of Alberta, Canada}
}

\contribution{
    This paper presents an algorithm for online learning with an additional offline dataset in the stochastic linear bandit setting with regret bounds.
    }
    {
        Prior work provides an algorithm for the multi-armed bandit case \citep{sentenac2025balancing}.
    }

\contribution{
    Empirical results showing competitiveness of our algorithm against an online and an offline solution simultaneously
    }
    {
        None
    }

\keywords{Linear bandits, offline-to-online, regret minimization} 

\summary{
    We study online learning with an additional offline dataset in the stochastic linear bandit setting. Although this problem arises frequently in practice, the offline-to-online tradeoff remains poorly understood in structured environments. We propose a linear bandit algorithm that balances this tradeoff: it relies on offline data during early rounds, and increasingly favors exploration as the horizon grows. We establish regret bounds showing that our method is simultaneously competitive with both purely online and purely offline solutions. In particular, it achieves sublinear regret relative to the optimal action in the number of online interactions, while its regret relative to an offline reference decreases as the number of offline samples grows. Empirical results further demonstrate its effectiveness across various problem parameters.
}

\begin{document}

\makeCover  
\maketitle  

\begin{abstract}
    We study online learning with an additional offline dataset in the stochastic linear bandit setting. Although this problem arises frequently in practice, the offline-to-online tradeoff remains poorly understood in structured environments. We propose a linear bandit algorithm that balances this tradeoff: it relies on offline data during early rounds, and increasingly favors exploration as the horizon grows. We establish regret bounds showing that our method is simultaneously competitive with both purely online and purely offline solutions. In particular, it achieves sublinear regret relative to the optimal action in the number of online interactions, while its regret relative to an offline reference decreases as the number of offline samples grows. Empirical results further demonstrate its effectiveness across various problem parameters.
\end{abstract}


\section{Introduction}
\looseness=-1
We study the \emph{offline-to-online learning} problem in the stochastic linear bandit model~\citep{lattimore2020bandit}, where a learner sequentially selects actions and receives noisy linear rewards while having access to an offline dataset. Such offline-to-online learning problems arise frequently in practice, for example, a recommendation system typically has access to historical user-preference data which can be used to improve the next version of the system. Despite its practical importance, the offline-to-online learning problem remains poorly understood in structured environments, even in the simple linear bandit setting.

\looseness=-1
In offline learning, a common approach is \emph{pessimism in the face of uncertainty} \citep{li2022pessimism,xiao2021optimality}, which yields solutions that are competitive with respect to the offline data. However, such methods do not explore and may incur large regret relative to the optimal policy when the learning horizon is long. In contrast, online learning algorithms typically rely on \emph{optimism in the face of uncertainty} \citep{li2010contextual,abbasi2011improved}, which encourages exploration and achieves sublinear online regret. Nevertheless, when the learning horizon is short, optimistic strategies may explore excessively and consequently suffer large regret relative to the policy derived from the offline data. The central challenge in the offline-to-online setting is therefore to balance these two principles, effectively trading off between exploiting the offline solution and exploring to improve long-term performance \citep{sentenac2025balancing}.

\looseness=-1
We achieve this balance by managing an \emph{exploration budget} in the linear bandit setup. Intuitively, by selecting the pessimistic action computed from the offline data, the learner can ensure temporarily low regret relative to the offline dataset, thereby accumulating budget that can later be used for exploration. Once sufficient budget has been accumulated, the learner can select optimistic actions to explore potentially better alternatives. Building on this idea, in \Cref{sec:method}, we propose the algorithm \alg. In each online round, \alg carefully balances between exploration using upper confidence bounds (UCB) and budget accumulation using lower confidence bounds (LCB). \Cref{prop:loff-subopt} formally characterizes the budget accumulation by using LCB in the stochastic linear bandit model.

\looseness=-1
We show that, with high probability, the performance of our algorithm is simultaneously close to both the optimism-based online solution (\Cref{thm:linoto-reg}) and the pessimism-based offline solution (\Cref{thm:linoto-offreg}). Moreover, in \Cref{sec:experiments}, we conduct numerical experiments across a range of problem settings and demonstrate that \alg performs nearly as well as the better of the offline and online solutions. Overall, our work contributes to a growing line of research that leverages prior knowledge or hints to guide the exploration–exploitation tradeoff in online learning \citep{cutkosky2022leveraging,wei2020taking,bhaskara2023bandit,alamdari2024jump}.

\section{Related Work}
There is an extensive body of literature on both online and offline learning. In this section, we focus on three areas closely related to our offline-to-online setting.

\textbf{Warm starting online learning with offline data.}
The problem of leveraging offline data to warm start online learning has been studied across various settings. For the multi-armed bandit (MAB) case, \citet{shivaswamy2012multi} proposed the HUCB1 algorithm, which improves the regret of the standard UCB1 \citep{auer2002finite} by constructing tighter confidence intervals. \citet{cheung2024leveraging} generalized this result to scenarios with distribution shifts between the offline and the online data, establishing tight upper and lower bounds. The work of \citet{he2024learning} considers the case where offline and online data have different feedback structures for MAB, also proposing algorithms based on UCB. \citet{hao2023leveraging} proposed a Thompson Sampling (TS) algorithm for linear bandits and derived Bayesian regret bounds when the offline data is generated by an expert. The recent work of \citet{vijayan2025regret} proposes an elimination-based algorithm for linear bandits using an extended D-optimal design for exploration, providing near-optimal regret bounds for warm-starting with offline data. In contrast to our work, they solely focus on online regret and do not consider the notion of regret relative to an offline reference. We refer readers to \citet{vijayan2025regret} for an extensive discussion on warm starting online learning using offline data.

\textbf{Online learning with hints.}
Our problem can also be viewed through the lens of online learning with hints, prior knowledge, or side information. \citet{cutkosky2022leveraging} considered the linear bandit problem in which a hint provides a (possibly imperfect) estimate of the optimal action. They proposed an algorithm that improves upon the standard regret guarantees when the hint is accurate while remaining robust to inaccuracies. More recently, \citet{bhaskara2023bandit} investigated a similar problem in the adversarial linear bandit optimization setting. In a different direction, \citet{wei2020taking} studied the contextual bandit problem in both stochastic and adversarial settings where hints are given as predictions of losses, showing that the algorithm can achieve improved regret bounds when the prediction error is small. In all of these works, hints are typically provided as \emph{black-box} side information. This contrasts with our setting, where prior knowledge is available in the structured form of an offline dataset.

\textbf{Conservative exploration.}
In conservative exploration \citep{wu2016conservative,kazerouni2017conservative}, algorithms balance exploration and safety by assuming access to a known safe baseline action. Our offline-to-online learning setting can be cast within this framework by treating the pessimistic action computed from the offline dataset as the safe baseline. This perspective was adopted by \citet{sentenac2025balancing} in the multi-armed bandit (MAB) setting, whose work is most closely related to ours. They propose an algorithm that balances the UCB1 \citep{auer2002finite} and LCB algorithms, and provide a detailed analysis of the tradeoffs between UCB1 and LCB. However, their analysis is restricted to the unstructured MAB setting, in which the regret scales with the number of actions. This dependence can become prohibitive when the action space is large. In contrast, in linear bandits the regret scales with the dimension of the action vectors, which allows for substantially improved scalability in high-dimensional action spaces.

\section{Problem Setting} \label{sec:problem}
\looseness=-1
\paragraph{Notation.} The set $\{1, \dots, n\}$ is denoted by $[n]$ for any $n \in \N$. The set of probability distributions over any set $S$ is denoted by $\cM_1(S)$ and the cardinality of $S$ is denoted by $|S|$. We write the inner product between two vectors $x, y \in \R^d$ as $\ip{x, y} = x^\top y$. For a symmetric positive definite matrix $A$, we denote $\norm{x}_A = \sqrt{x^\top A x} = \sqrt{\ip{Ax, x}}$ for any vector $x$. For two positive semidefinite matrices $A$ and $B$, we write $A \preceq B$ if $B - A$ is positive semidefinite. The trace of a matrix $A$ (sum of its diagonal entries) is denoted by $\tr(A)$ and the determinant of $A$ is denoted by $\det(A)$. Throughout the paper, $\log$ is used to denote the natural logarithm. We use $\tilde{O}(\cdot)$ to hide logarithmic factors in the problem parameters.

\subsection{Linear Bandits with Offline Data}
Let $\cA = \{a_1, \dots, a_k \} \subset \R^d$ be the set of actions (vectors) that the learner can choose and $\cD \coloneqq (\cD_a)_{a \in \cA} $ be the offline dataset.  Similar to previous works \citep{sentenac2025balancing,vijayan2025regret,cheung2024leveraging}, we consider the setting in which the offline data is collected by a fixed design, i.e., the actions are chosen by a non-adaptive policy with deterministic action counts. More concretely, for each $a \in \cA$, $\cD_a \coloneqq ( X_1^{(a)}, \dots, X_{m_a}^{(a)} )$ is the dataset corresponding to action $a$ and $ m_a $ denotes the number of rewards sampled from action $a$. The reward samples $X_i^{(a)} \sim P_a$ for all $i \in [m_a]$, where $\nu \coloneqq (P_a)_{a \in \cA}$ are the reward distributions for each action constituting the \emph{unknown} environment. Given the \emph{offline} dataset $\cD$, the learner interacts \emph{online} with the environment $\nu$ over a sequence of $n \in \N$ rounds. At every round $t \in [n]$, the learner chooses action $A_t \in \cA$, and observes a reward $X_t \sim P_{A_t}$. The goal of the learner is to choose actions to maximize the total reward collected over these $n$ rounds while also leveraging the offline dataset $\cD$ to improve the performance. We make the following standard assumptions on the reward distributions and the actions.

\begin{assumption}[Linear reward function]
    There exists an unknown parameter $\theta_* \in \R^d$ such that $X_t = \ip{\theta_*, A_t} + \eta_t$ and $X_i^{(a)} = \ip{\theta_*, a} + \eta_i^{(a)}$ for all $t \in [n]$, $i \in [m_a]$ and $a \in \cA$, where $\eta_t$ and $\eta_i^{(a)}$ are the noise terms.
\end{assumption}

\begin{assumption}[Subgaussian noise]
    The noise terms $(\eta_t)_{t \in [n]}$ are conditionally $1$-subgaussian given the past online and the offline data. That is, if $\cF_t = \sigma(\cD, A_1, X_1, \dots, A_{t-1}, X_{t-1}, A_t)$ is the $\sigma$-algebra generated by the past online data up to round $t$ and the offline dataset $\cD$, then for all $\lambda \in \R$, we have
    $\EE{\exp(\lambda \eta_t) \mid \cF_t} \le \exp\left( \lambda^2/2 \right)$ almost surely.
    Further, $(\eta_i^{(a)})_{i \in [m_a], a \in \cA}$ are independent $1$-subgaussian, i.e., $\EE{\exp(\lambda \eta_i^{(a)})} \le \exp(\lambda^2/2)$ for all $\lambda \in \R$.
\end{assumption}

\begin{assumption}[Bounded actions, parameters, and rewards]
    There exists constants $B, L > 0$ such that $\norm{\theta_*}_2 \le B$, $\norm{a}_2 \le L$ and $\ip{\theta_*,a} \in [0,1]$ for all $a \in \cA$.
\end{assumption}

\subsection{Performance Metrics}
\looseness=-1
We denote the mean reward of action $a \in \cA$ by $\mu_a \coloneqq \ip{\theta_*, a}$. We measure the performance of any policy $\pi$, which may depend on the history and the offline data, relative to two baselines: (1) the highest mean reward of any arm $\mu_* \coloneqq \max_{a \in \cA} \mu_a$ and (2) the mean reward of the offline dataset $\muoff \coloneqq \sum_{a \in \cA} m_a \mu_a / m$, where $m = \sum_{a \in \cA} m_a$ is the total number of offline reward samples. The pseudo-regret relative to $\mu_*$ is
\begin{align*}
    R_n(\pi, \nu) \coloneqq \sum_{t=1}^n \left( \mu_* - \ip{\theta_*, A_t} \right) \,,
\end{align*}
and the pseudo-regret relative to $\muoff$ is
\begin{align*}
    R_n^{\text{off}}(\pi, \nu) \coloneqq \sum_{t=1}^n \left( \muoff - \ip{\theta_*, A_t} \right) \,.
\end{align*}
Note that both $R_n(\pi, \nu)$ and $R_n^{\text{off}}(\pi, \nu)$ are random quantities because  the sequence of actions $(A_t)_{t \in [n]}$ may be randomized. The pseudo-regret relative to $\mu_*$ is the standard notion of regret while the pseudo-regret $ R_n^{\text{off}}(\pi, \nu) $ captures the loss of performance of $\pi$ compared to our offline reference $\muoff$, induced by the offline sample counts $ (m_a)_{a\in\mathcal A}$. Together, the two metrics characterize different aspects of the offline-to-online tradeoff: $R_n(\pi,\nu)$ measures how far the algorithm is from optimal performance, while $ R_n^{\text{off}}(\pi, \nu) $ quantifies the improvement (or degradation) relative to the historical action distribution represented in the offline dataset. When the dependence on $\nu$ is clear from the context, we write $R_n(\pi)$ and $R_n^{\text{off}}(\pi)$ for brevity.

\subsection{Parameter Estimation}
\looseness=-1
As is common in the linear bandit literature~\citep{abbasi2011improved}, we estimate the true parameter $\theta_*$ using the least squares estimator $\hat \theta_t$ constructed at the \emph{end} of round $t$ using both the offline and the online data:
\begin{align}
    \hat \theta_t = V_t^{-1} \left( \sum_{a \in \cA} \sum_{i=1}^{m_a} X_i^{(a)} a + \sum_{s=1}^t X_s A_s \right) \label{eq:hat_theta_t}\,.
\end{align}
Here, $V_t = V_0 + \sum_{s=1}^t A_s A_s^\top$ is the covariance matrix of both the offline and the online data and $V_0 = \sum_{a \in \cA} m_a aa^\top$ is the covariance matrix corresponding to only the offline data.\footnote{For analysis purposes, we assume that $V_0$ is invertible; if not, we can add a small regularization term to ensure invertibility.}

\section{Method}\label{sec:method}

\looseness=-1
In the offline-to-online setting, the goal is to minimize both the regret against the optimal action, $R_n$, and the regret against the offline reference, $R_n^{\text{off}}$. Upper Confidence Bound (UCB) and Lower Confidence Bound (LCB) methods have been shown to be effective in achieving strong online and offline performance, respectively \citep{abbasi2011improved, li2022pessimism, auer2002finite, xiao2021optimality}. This section presents our algorithm \alg, which carefully switches between UCB and LCB strategies to achieve competitive performance with respect to both $R_n$ and $R_n^{\text{off}}$. 

\looseness=-1
To build confidence bounds, we construct confidence sets $\cC_t$ around the least squares parameter estimate $\hat \theta_t$ given by the following proposition. The proof is deferred to \Cref{app:conf-sets}.

\begin{proposition}\label{prop:total_confidence_set}
    Let $\delta \in (0,1)$. Then with probability at least $1-\delta$, it holds that for all $t \in [n]$
    \begin{align*}
         \norm{\hat \theta_{t-1} - \theta_*}_{V_{t-1}} \le \sqrt{\betaoff} + \sqrt{\beta_t} \,,
    \end{align*}
    where $\sqrt{\betaoff} = \sqrt{8d\log6 + 8\log(2/\delta)}$ is the confidence radius corresponding to the offline data and $\sqrt{\beta_t} = \sqrt{2\log(2/\delta) + \log\left( \det V_{t-1}/\det V_0 \right)}$ with $\beta_1 = 0$ is the confidence radius from both the offline and the online data. Furthermore, for all $t \in [n]$, let
    \begin{align} \label{eq:confidence_set}
        \cC_t = \left\{ \theta \in \R^d \colon \norm{\theta - \hat \theta_{t-1}}_{V_{t-1}} \le \sqrt{\rho_t} \right\} \,,
    \end{align}
    where $\sqrt{\rho_t} \coloneqq \sqrt{\betaoff} + \sqrt{\beta_t}$. Then $\PP(\text{exists } t \in [n] \colon \theta_* \notin \cC_t) \le \delta$.
\end{proposition}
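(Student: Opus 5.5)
Since $V_0$ is invertible and there is no regularizer, the estimation error decomposes exactly into an offline and an online noise term. Writing $S_0 = \sum_{a}\sum_{i=1}^{m_a}\eta_i^{(a)} a$ and $S_{t-1} = \sum_{s=1}^{t-1}\eta_s A_s$, the definition \eqref{eq:hat_theta_t} gives $\hat\theta_{t-1} - \theta_* = V_{t-1}^{-1}(S_0 + S_{t-1})$. The triangle inequality then yields
\[
\norm{\hat\theta_{t-1}-\theta_*}_{V_{t-1}} \le \norm{S_0}_{V_{t-1}^{-1}} + \norm{S_{t-1}}_{V_{t-1}^{-1}}\,.
\]
I will bound each term with probability at least $1-\delta/2$ and finish with a union bound. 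The statement about $\cC_t$ is then immediate, because $\theta_*\notin\cC_t$ for some $t$ implies that the good event fails.

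\textbf{Offline term.} Since $V_{t-1}\succeq V_0$, we have $V_{t-1}^{-1}\preceq V_0^{-1}$, so $\norm{S_0}_{V_{t-1}^{-1}} \le \norm{S_0}_{V_0^{-1}}$ uniformly in $t$. It therefore suffices to bound a single fixed-design quantity.

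Write $\norm{S_0}_{V_0^{-1}} = \sup_{\norm{u}_2\le 1}\ip{u, V_0^{-1/2}S_0}$. For a fixed unit vector $u$, the quantity $\ip{u,V_0^{-1/2}S_0} = \sum_{a,i}\eta_i^{(a)}\ip{u,V_0^{-1/2}a}$ is a sum of independent subgaussian terms. Its variance proxy is $\sum_a m_a \ip{u,V_0^{-1/2}a}^2 = u^\top V_0^{-1/2}V_0V_0^{-1/2}u = 1$, so it is $1$-subgaussian.

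Take a $1/2$-net $\mathcal N$ of the unit sphere with $|\mathcal N|\le 6^d$ (the crude bound $(1+2/\varepsilon)^d$ with $\varepsilon = 1/2$ gives $5^d$; $6^d$ suffices). The standard net argument gives $\sup_u \le 2\max_{u\in\mathcal N}$. A union bound over $\mathcal N$ with failure probability $\delta/2$ then gives
\[
\norm{S_0}_{V_0^{-1}} \le 2\sqrt{2\log(6^d\cdot 2/\delta)} = \sqrt{8d\log 6 + 8\log(2/\delta)} = \sqrt{\betaoff}\,,
\]
which matches the stated constant exactly and confirms the plan.

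\textbf{Online term.} Apply the self-normalized martingale bound of \citet{abbasi2011improved} with the filtration $\cF_t$ and regularizer $V_0$. The regularizer is $\cF_1$-measurable, since the offline counts are deterministic; it is in fact deterministic, which is what the theorem needs. The noise $\eta_s$ is conditionally $1$-subgaussian given $\cF_s$, even given the offline data, by the noise assumption. The bound then gives, simultaneously for all $t$ with probability at least $1-\delta/2$,
\[
\norm{S_{t-1}}^2_{V_{t-1}^{-1}} \le 2\log(2/\delta) + \log\bigl(\det V_{t-1}/\det V_0\bigr) = \beta_t\,.
\]
For $t=1$ we have $S_0^{\text{online}}=0$. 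The formula gives $2\log(2/\delta)$, but the bound $0$ also holds trivially, which is consistent with the convention $\beta_1=0$.

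\textbf{Main obstacle.} The delicate step is justifying the decoupling. The online noise is independent of neither the offline noise nor the algorithm's choices, because actions depend on $\cD$. The remedy is to put $\cD$ into $\cF_1$, as the assumption allows, so that the self-normalized bound applies conditionally. The offline term is handled separately, by a fixed-design argument that needs no martingale structure. The monotonicity $V_{t-1}\succeq V_0$ is what makes the offline bound uniform in $t$ without any additional union bound over time.
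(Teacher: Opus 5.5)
Your proof is correct and is essentially the paper's argument: the same noise decomposition and triangle inequality, the reduction $\norm{S_0}_{V_{t-1}^{-1}} \le \norm{S_0}_{V_0^{-1}}$ via $V_0 \preceq V_{t-1}$, and a $\delta/2$ union bound between the fixed-design covering bound (which you rederive, while the paper simply cites it as \Cref{lem:off-conc}) and the self-normalized martingale bound with deterministic regularizer $V_0$ (\Cref{lem:mart-conc}). The only blemish is notational: at $t=1$ your online sum $S_{t-1}$ collides with the offline sum $S_0$, which you already patch by writing $S_0^{\text{online}}$.
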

Using the confidence set $\cC_t$, \alg switches between an optimistic action $U_t$ and a pessimistic action $L_t$ given by
\begin{align}\label{eq:ucb-lcb}
    &U_t = \argmax_{a \in \cA} \ucb_t(a) \,, \quad L_t = \argmax_{a \in \cA} \lcb_t(a), \\
    & \text{where } \ucb_t(a) = \sup_{\theta \in \cC_t} \ip{\theta, a} \,, \quad \lcb_t(a) = \max_{t' \le t} \inf_{\theta \in \cC_{t'}} \ip{\theta, a} \nonumber \,.
\end{align}

\looseness=-1
Essentially, the UCB action $U_t$ can reduce the regret $R_n$ by optimism but may increase the regret $R_n^{\text{off}}$, while the LCB action $L_t$ can reduce the regret $R_n^{\text{off}}$ by pessimism but may increase the regret $R_n$. We balance between these two actions to achieve a good performance for both $R_n$ and $R_n^{\text{off}}$.

\subsection{Algorithm} \label{sec:alg}

\looseness=-1
We control the balance between $R_n$ and $R_n^{\text{off}}$ using an \emph{exploration budget} accumulated by choosing the pessimistic action $L_t$. Recall that $R_n^{\text{off}}$ represents the reward deficit of an algorithm relative to $\muoff$, and the LCB strategy can avoid large budget deficits in multi-armed bandits \citep{sentenac2025balancing}. Based on this intuition, we record the algorithm’s excess reward over $\muoff$ and use it as an exploration budget to decide when to choose the optimistic action $U_t$ for exploration. If choosing $U_t$ would not significantly deplete the budget, the algorithm proceeds with exploration; otherwise, it selects $L_t$ to accumulate additional budget before exploring further.

\looseness=-1
This switching mechanism is inspired by conservative bandits \citep{wu2016conservative, kazerouni2017conservative}, and a similar technique was used by \citet{sentenac2025balancing} in the multi-armed offline-to-online bandit setting. 
This budget-based exploration approach crucially relies on constructing a \emph{safe} baseline action—namely, the LCB action in our setting—that is guaranteed to achieve a reward not too far from $\muoff$.
However, the budget gained from the safe LCB action was previously unknown in the linear bandit setting. The following proposition provides the key to constructing this baseline and relies on a technical result that bounds the average uncertainty of actions over the offline dataset. The proof can be found in \Cref{app:proof-of-offline-budget}.

\begin{proposition}\label{prop:loff-subopt}
    Fix $\delta_1 > 0$. Let $\muoff = (1/m)\sum_{a \in \cA} m_a \ip{\theta_*,a}$ and $\lcbaoff = \argmax_{a \in \cA} \lcb_1(a)$ be the LCB action computed from the offline dataset. Then with probability at least $1-\delta_1$,
    \begin{align*}
        \muoff - \mu_{\lcbaoff} \le 2 \sqrt{\frac{d\betaoff}{m}} \,,
    \end{align*}
    where $\betaoff = 8d\log 6 + 8\log(1/\delta_1)$.
\end{proposition}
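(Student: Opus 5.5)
On the event where the offline confidence set $\cC_1$ (centered at $\hat\theta_0$ with the $V_0$-norm and radius $\sqrt{\betaoff}$) contains $\theta_*$, I would chain three inequalities. First, pessimism of $\lcbaoff$. Second, a comparison with the offline action distribution. Third, a bound on the average width of the confidence intervals over the offline data.

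\textbf{Step 1 (confidence event).} With only offline data, $\hat\theta_0-\theta_* = V_0^{-1}\sum_a\sum_i \eta_i^{(a)} a$. Since the offline design is fixed, I would control $\|\hat\theta_0-\theta_*\|_{V_0} = \sup_{\|u\|_2\le1}\ip{u, V_0^{-1/2}\sum \eta_i^{(a)} a}$. For fixed $u$, this is subgaussian with variance proxy $\sum_a m_a (u^\top V_0^{-1/2}a)^2 = \|u\|^2$. A $1/2$-net of the unit ball of size $6^d$ and a union bound then give $\|\hat\theta_0-\theta_*\|_{V_0}^2\le \betaoff = 8d\log6+8\log(1/\delta_1)$ with probability $1-\delta_1$. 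This is the offline part of \Cref{prop:total_confidence_set}, with $\delta_1$ in place of $\delta/2$. On this event we have $\lcb_1(a)=\ip{\hat\theta_0,a}-\sqrt{\betaoff}\|a\|_{V_0^{-1}}\le\mu_a$ for every $a$, and also $\mu_a\le \lcb_1(a)+2\sqrt{\betaoff}\|a\|_{V_0^{-1}}$.

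\textbf{Step 2 (chaining).} Using the two inequalities from Step 1 and maximality of $\lcbaoff$,
\[
\muoff=\frac1m\sum_a m_a\mu_a\le \frac1m\sum_a m_a\lcb_1(a)+\frac{2\sqrt{\betaoff}}{m}\sum_a m_a\|a\|_{V_0^{-1}}\le \lcb_1(\lcbaoff)+\frac{2\sqrt{\betaoff}}{m}\sum_a m_a\|a\|_{V_0^{-1}}.
\]
Since $\lcb_1(\lcbaoff)\le\mu_{\lcbaoff}$, it remains to bound the average width $\frac1m\sum_a m_a\|a\|_{V_0^{-1}}$.

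\textbf{Step 3 (average uncertainty).} This is the key technical point. By Cauchy--Schwarz (Jensen with weights $m_a/m$),
\[
\frac1m\sum_a m_a\|a\|_{V_0^{-1}}\le\sqrt{\frac1m\sum_a m_a\, a^\top V_0^{-1}a}=\sqrt{\frac1m\tr\Big(V_0^{-1}\sum_a m_a aa^\top\Big)}=\sqrt{\frac{\tr(I_d)}{m}}=\sqrt{\frac dm}.
\]
Substituting this into Step 2 gives $\muoff-\mu_{\lcbaoff}\le 2\sqrt{d\betaoff/m}$.

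\textbf{Main obstacle.} The averaging identity in Step 3 is easy once $V_0$ is invertible; if $V_0$ is regularized, the trace is only bounded by $d$, which still suffices. The more delicate part is Step 1: obtaining the uniform confidence bound with exactly the stated constant $\betaoff$, which requires handling the net-discretization factor correctly. I expect the rest to be routine.
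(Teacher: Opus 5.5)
Your proposal is correct and follows essentially the same route as the paper: on the event $\theta_*\in\cC_1$ you use pessimism of $\lcbaoff$, the fact that $\max_a \lcb_1(a)$ dominates the $m_a/m$-weighted average of $\lcb_1(a)$, the width bound $2\sqrt{\betaoff}\|a\|_{V_0^{-1}}$, and the trace identity of \Cref{lem:norm_bound} to get $\sqrt{d/m}$. The only difference is that you re-derive the offline confidence event via the $1/2$-net argument, which does give exactly $8d\log 6+8\log(1/\delta_1)$, whereas the paper simply cites it from \citet{lattimore2020bandit}; your aside on a regularized $V_0$ would also need a bias term in the radius, but that case is outside the statement.
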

\looseness=-1
Proposition \ref{prop:loff-subopt} implies that the mean reward of $\lcbaoff$ is not too far from the mean reward of the offline dataset. Since $\lcbaoff$ can be computed directly from the offline data, we construct our baseline reward using a lower bound on $\mu_{\lcbaoff}$ (with some slack). That is, we define our baseline safe reward as
\begin{align} \label{eq:baseline_reward}
    r_b \coloneqq \lcb_1(\lcbaoff) - \alpha S \,,
\end{align}
where $\alpha$ is a tuning parameter and $S\coloneqq\sqrt{d\betaoff/m}$ represents the slack allowed from the lower bound on the offline LCB action's reward. Now we are ready to define the exploration budget $B(t)$ at the beginning of round $t$:
\begin{align}\label{eq:budget}
    B(t) \coloneqq \sum_{s=1}^{t-1} \lcb_t(A_s) + \lcb_t(U_t) - t r_b \,.
\end{align}

\looseness=-1
Our algorithm \alg computes $B(t)$ at the beginning of each round $t$ using the LCB values derived from the confidence set $\cC_t$ (\Cref{prop:total_confidence_set}) and the baseline reward $r_b$ (\cref{eq:baseline_reward}). If $B(t)>0$, choosing $U_t$ will not cause the total reward to drop below $r_b$; therefore, \alg selects $U_t$. Otherwise, it chooses $L_t$ and accumulates an exploration budget of $\lcb_t(A_t)-r_b\ge\lcb_1(\lcbaoff)-r_b=\alpha S$. After executing the chosen action $A_t\in\{U_t,L_t\}$, the algorithm observes a reward sample $X_t\sim P_{A_t}$. Finally, it updates the estimate $\hat{\theta}_t$ using \cref{eq:hat_theta_t} and the covariance matrix $V_t$, and constructs the confidence set $\cC_{t+1}$ for the next round using \cref{eq:confidence_set}. The pseudocode for \alg is presented in \Cref{alg:lin_oto}.

\begin{algorithm}[t]
\caption{Offline-to-Online Algorithm for Linear Bandits (LinOtO)}\label{alg:lin_oto}
\SetKwInput{Input}{Input}

\Input{Offline dataset $\cD$, horizon $n$, baseline tuning parameter $\alpha$}

\BlankLine
Compute $\hat \theta_0$ and $V_0$ using $\cD$ and construct the confidence set $\cC_1$ using \cref{eq:confidence_set}

Compute baseline reward $r_b$ using \cref{eq:baseline_reward}

\For{$t = 1$ \KwTo $n$}{
   Compute the UCB action $U_t$ and the LCB action $L_t$ using $\cC_t$ as in \cref{eq:ucb-lcb} \label{algline:ucb-lcb}

    Compute budget $B(t)$ using \cref{eq:budget}
    
    \If{$B(t) > 0$}{
        
        Choose action $A_t = U_t$
    }
    \Else{Choose action $A_t = L_t$}

    Observe reward $X_t$
    
    Update $V_t$ and $\hat \theta_t$ using \cref{eq:hat_theta_t}, and construct confidence set $\cC_{t+1}$ using \cref{eq:confidence_set}
}
\end{algorithm}

\begin{remark}[Computational complexity]
We make the following remarks on the computational complexity of \alg.
\begin{enumerate}
    \item Computing $U_t$ and $L_t$ on \cref{algline:ucb-lcb} in \Cref{alg:lin_oto} requires solving a bilinear optimization program, which is known to be intractable in general. However, for the finite action set considered in our setting, it can be solved easily. Computation can further be improved by using the Sherman-Morrison formula to incrementally update $V_t^{-1}$. Therefore the per-round computational complexity of computing the actions becomes $O(|\cA| + |\cA|d^2)$ \citep[Chapter 19]{lattimore2020bandit}.

    \item Recall that the LCB value is $\lcb_t(a)=\max_{t'\le t}\inf_{\theta\in\cC_{t'}}\ip{\theta,a}$. We can reduce the computational overhead of evaluating $\lcb_t(a)$ by maintaining a running maximum at each round $t$.

    \item Instead of computing the sum of individual LCBs in \cref{eq:budget}, our implementation computes the LCB of the aggregate action vector $\sum_{s=1}^{t-1} A_s$. This heuristic substantially reduces the computational cost, since the algorithm can incrementally maintain the cumulative action vector $\sum_{s=1}^{t-1} A_s$ without re-evaluating $\lcb_t(A_s)$ for every past round $s$. Empirically, we observe that both implementations exhibit very similar performance.
\end{enumerate}
\end{remark}

\begin{remark}[Unknown horizon]
\Cref{alg:lin_oto} can be adapted to the unknown horizon setting using the doubling trick: run the algorithm in epochs of length $n=2^k$ for $k=0,1,2,\dots$, resetting all historical computations (except the offline estimates) at the end of each epoch. This adapted algorithm enjoys almost identical regret guarantees to \Cref{alg:lin_oto}. For further details, see \citet{sentenac2025balancing}.
\end{remark}

\section{Regret Analysis of LinOtO}\label{sec:regret-analysis}
In this section, we present regret bounds for \alg with respect to both $R_n$ and $R_n^{\text{off}}$, establishing its competitiveness against both LinUCB and LinLCB (the algorithm that chooses the LCB action in every round). The full proofs of all the results in this section are long and therefore deferred to the supplementary section.

\subsection{Regret relative to $\mu_*$}
\looseness=-1
We bound the regret of \alg against the optimal action by decomposing $R_n$ into rounds where the UCB ($U_t$) and LCB ($L_t$) actions are chosen: 
\begin{align*}
    R_n(\alg) &= 
    \sum_{t \in \cU_n} \left( \ip{\theta_*, a_*} - \ip{\theta_*, U_t} \right) + \sum_{t \in \cL_n} \left( \ip{\theta_*, a_*} - \ip{\theta_*, L_t} \right)\\
    &\leq \sum_{t \in \cU_n} \left( \ip{\theta_*, a_*} - \ip{\theta_*, U_t} \right) + \left|\cL_n\right|\max_{t \in \cL_n}\left( \ip{\theta_*, a_*} - \ip{\theta_*, L_t} \right)\\
    &\leq \sum_{t \in \cU_n} \left( \ip{\theta_*, a_*} - \ip{\theta_*, U_t} \right) + 2\sqrt{\betaoff}\left|\cL_n\right| \norm{a_*}_{V_0^{-1}}\;,
\end{align*}
where $\cU_n$ and $\cL_n$ denote the rounds where the learner chooses UCB and LCB actions, respectively.
In the last line, we used the following suboptimality bound for LCB, which is a special case of \citet[Theorem 1]{li2022pessimism}.
\begin{lemma}[LCB suboptimality w.r.t optimal action]\label{lem:lcb-a*}
Suppose that the event $\cE = \{\theta_* \in \cC_t \text{ for all } t \in [n]\}$ holds. Then for any $t \in [n]$, 
    \begin{align*}
        \ip{\theta_*, a_*} - \ip{\theta_*, L_t} \le 2 \sqrt{\betaoff}\norm{a_*}_{V_0^{-1}} \,.
    \end{align*}
\end{lemma}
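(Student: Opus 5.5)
The argument is the standard pessimism chain: compare $L_t$ to $a_*$ through $\lcb_t$, which lower bounds the true reward on $\cE$ and is maximized by $L_t$, and then control the width of the confidence set at $a_*$ using only the offline data.

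\textbf{Step 1: $\lcb_t$ lower bounds the true reward.} On $\cE$, we have $\theta_*\in\cC_{t'}$ for every $t'\le t$. Hence $\inf_{\theta\in\cC_{t'}}\ip{\theta,a}\le\ip{\theta_*,a}$ for each such $t'$. Taking the maximum over $t'$ gives $\lcb_t(a)\le \mu_a$ for all $a$, and in particular $\mu_{L_t}\ge \lcb_t(L_t)$.

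\textbf{Step 2: reduce to $a_*$.} Since $L_t$ maximizes $\lcb_t$, we get $\lcb_t(L_t)\ge\lcb_t(a_*)\ge \inf_{\theta\in\cC_1}\ip{\theta,a_*}$. The last inequality holds because the maximum defining $\lcb_t$ includes $t'=1$. Combining with Step 1,
\[
\ip{\theta_*,a_*}-\ip{\theta_*,L_t}\le \ip{\theta_*,a_*}-\inf_{\theta\in\cC_1}\ip{\theta,a_*} .
\]

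\textbf{Step 3: bound the width of $\cC_1$.} By \eqref{eq:confidence_set}, $\cC_1$ is the ellipsoid centred at $\hat\theta_0$ with matrix $V_0$ and radius $\sqrt{\rho_1}=\sqrt{\betaoff}+\sqrt{\beta_1}=\sqrt{\betaoff}$, using $\beta_1=0$. For any $\theta\in\cC_1$, Cauchy--Schwarz in the $V_0$-norm together with the triangle inequality through $\hat\theta_0$ give
\[
\ip{\theta_*-\theta,a_*}\le \bigl(\norm{\theta_*-\hat\theta_0}_{V_0}+\norm{\hat\theta_0-\theta}_{V_0}\bigr)\norm{a_*}_{V_0^{-1}}\le 2\sqrt{\betaoff}\,\norm{a_*}_{V_0^{-1}} .
\]
Here $\theta_*\in\cC_1$ on $\cE$, so both terms are at most $\sqrt{\betaoff}$. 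Taking the supremum over $\theta\in\cC_1$ bounds the right-hand side of Step 2 and yields the claim.

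The only point needing care is Step 2: we must use the $t'=1$ term of the maximum defining $\lcb_t$. Later confidence sets have radius $\sqrt{\betaoff}+\sqrt{\beta_t}$, which would not give the clean $V_0$-based bound. Everything else is direct, and $V_0$ is invertible by the paper's standing assumption.
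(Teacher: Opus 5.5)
Your proof is correct and is the standard pessimism chain ($\mu_{L_t}\ge\lcb_t(L_t)\ge\lcb_t(a_*)\ge\ip{\theta_*,a_*}-\text{width}$) behind \citet[Theorem 1]{li2022pessimism}; the paper gives no proof of its own and simply cites that result. The one adaptation the citation leaves implicit is handling the running maximum in $\lcb_t$, and you handle it correctly by taking the $t'=1$ term, where $\beta_1=0$ gives $\rho_1=\betaoff$ and hence the $V_0$-norm bound.
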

\looseness=-1
The regret during $\cU_n$ is bounded via a LinUCB analysis adapted for offline data. Thus, the key is to control the regret during $\cL_n$ by bounding the total number of times the LCB action is chosen, namely, $|\cL_n|$. 

\paragraph{Bounding $|\cL_n|$.}
\looseness=-1
By \cref{eq:budget}, \alg chooses $L_t$ only to cover budget deficits ($B(t) < 0$), generating an $\alpha S$ surplus that is then consumed by $U_t$. As the confidence bounds shrink over time (from \cref{eq:confidence_set}), the lower confidence bound of $U_t$ will eventually exceed the baseline $r_b$. Once this occurs, playing $U_t$ becomes self-funding, and $L_t$ will no longer be chosen. By equating the budget deficit incurred by $U_t$ with the $\alpha S$ increments, we can bound the total number of LCB actions by:
\begin{align*}
    |\cL_n| = \tilde O \left( \frac{d^2}{\alpha S ( \Delta_{\text{off}} + \alpha S)} \right) \,,
\end{align*}
where $\Delta_{\text{off}} = \ip{\theta_*, a_* - \lcbaoff}$. For the full proof, see \Cref{app:proof-linoto-reg}. 

\looseness=-1
Finally, to state the regret bound, we define an ``effective dimension'' \citep{valko2014spectral}:
\begin{align*}
    d_{\text{eff}} = \max \set{ i \in [d] \,:\, (i-1) \lambda_i(V_0) \le \frac{nL^2}{\log(1+nL^2/\lambda_1(V_0))}} \,,
\end{align*}
where $\lambda_1(V_0) \le \dots \le \lambda_d(V_0)$ are the eigenvalues of the offline covariance matrix $V_0$. The effective dimension $d_{\text{eff}}$ represents the maximum number of directions $i \in [d]$ for which the corresponding eigenvalues $\lambda_j(V_0)$ (for $j \le i$) are small. It captures the maximum number of directions where the offline data lacks sufficient coverage. Note that $d_{\text{eff}} \le d$.

\looseness=-1
By combining the above discussions, we have the following regret bound on $R_n$.
\begin{theorem}\label{thm:linoto-reg}
    With probability at least $1 - \delta$, the pseudo-regret of \alg is bounded as
    \begin{align*}
    R_n(\alg) = \tilde{O}\left( \sqrt{n \rho_n d_{\text{eff}} \log\left(1 + \frac{nL^2}{\lambda_{\min}(V_0)} \right)} + \frac{d^{2.5} \norm{a_*}_{V_0^{-1}}}{\alpha S (\Delta_{\text{off}} + \alpha S) } \right) \,.
\end{align*}
When $\lambda_{\min}(V_0) = \Omega(mL^2/d)$ and $\Delta_{\text{off}} + \alpha S \gtrsim \sqrt{\rho_n d/m}$, then with probability $1-\delta$, $R_n(\alg) = O\left( n \sqrt{\rho_n dd_{\text{eff}}/{m} }\right)$.
\end{theorem}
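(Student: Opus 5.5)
We work on the event $\cE=\{\theta_*\in\cC_t \ \forall t\in[n]\}$, which has probability at least $1-\delta$ by \Cref{prop:total_confidence_set}. We use the decomposition displayed before \Cref{lem:lcb-a*}:
\[
R_n(\alg)\le \sum_{t\in\cU_n}\bigl(\ip{\theta_*,a_*}-\ip{\theta_*,U_t}\bigr)+2\sqrt{\betaoff}\,|\cL_n|\,\norm{a_*}_{V_0^{-1}} .
\]
We bound the two terms separately.

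\textbf{UCB term.} On $\cE$, the standard optimism argument gives
\[
\ip{\theta_*,a_*}-\ip{\theta_*,U_t}\le \ucb_t(U_t)-\inf_{\theta\in\cC_t}\ip{\theta,U_t}\le 2\sqrt{\rho_t}\,\norm{U_t}_{V_{t-1}^{-1}} .
\]
Rewards lie in $[0,1]$, so each term is also at most $1$. Since $\rho_t$ is nondecreasing and $V_{t-1}\succeq V_0+\sum_{s<t,\,s\in\cU_n}A_sA_s^\top$, Cauchy--Schwarz gives
\[
\text{(UCB term)}\le 2\sqrt{n\rho_n\textstyle\sum_{t\in\cU_n}\min(1,\norm{U_t}^2_{V_{t-1}^{-1}})} .
\]
The elliptical potential lemma then gives $\sum_t\min(1,\norm{A_t}^2_{V_{t-1}^{-1}})\le 2\log(\det V_n/\det V_0)$. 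To obtain $d_{\text{eff}}$ instead of $d$, bound $\log\det(V_n)/\det(V_0)\le\sum_i\log(1+nL^2/(d\text{-share})/\lambda_i(V_0))$, in the spirit of \citet{valko2014spectral}. Split the eigen-directions into the $d_{\text{eff}}$ smallest ones, each contributing at most $\log(1+nL^2/\lambda_{\min}(V_0))$, and the remaining ones. For the remaining ones, $\lambda_i(V_0)$ exceeds $nL^2/((i-1)\log(\cdot))$ by the definition of $d_{\text{eff}}$, so their total contribution is at most $\sum_i nL^2/\lambda_i(V_0)$, which is $O(d_{\text{eff}}\log(\cdot))$. This yields the first term of the theorem.

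\textbf{LCB count.} This is the main obstacle. Let $\tau$ be the last round with $t\in\cL_n$. At that round $B(\tau)\le 0$. Each LCB round before $\tau$ contributes at least $\lcb_\tau(L_s)-r_b\ge\lcb_1(\lcbaoff)-r_b=\alpha S$, because the LCBs are running maxima and hence monotone. This gives
\[
\alpha S\,(|\cL_n|-1)\le\sum_{s\in\cU_\tau\cup\{\tau\}}\bigl(r_b-\lcb_\tau(U_s)\bigr) .
\]
Write $r_b-\lcb_\tau(U_s)\le r_b-\mu_{U_s}+2\sqrt{\rho_\tau}\norm{U_s}_{V_{\tau-1}^{-1}}$. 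By \Cref{prop:loff-subopt} and $\lcb_1(\lcbaoff)\le\mu_{\lcbaoff}$, we have $r_b\le\mu_{a_*}-\Delta_{\text{off}}-\alpha S$. Hence
\[
r_b-\mu_{U_s}\le(\mu_*-\mu_{U_s})-(\Delta_{\text{off}}+\alpha S),
\]
and $\mu_*-\mu_{U_s}\le 2\sqrt{\rho_s}\norm{U_s}_{V_{s-1}^{-1}}$. Summing, with $N=|\cU_\tau|+1$, we get
\[
\alpha S|\cL_n|\lesssim 4\sqrt{\rho_n}\textstyle\sum_{s}\norm{U_s}_{V_{s-1}^{-1}}-N(\Delta_{\text{off}}+\alpha S)+\alpha S .
\]
The elliptical potential lemma bounds the sum by $\sqrt{N d\log(\cdot)}$. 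The resulting expression $c\sqrt{N\rho_n d\log}-N(\Delta_{\text{off}}+\alpha S)$, maximized over $N$, is at most $c^2\rho_n d\log/(4(\Delta_{\text{off}}+\alpha S))$. Therefore
\[
|\cL_n|=\tilde O\!\left(\frac{d\rho_n}{\alpha S(\Delta_{\text{off}}+\alpha S)}\right)=\tilde O\!\left(\frac{d^2}{\alpha S(\Delta_{\text{off}}+\alpha S)}\right),
\]
since $\rho_n=\tilde O(d)$. Multiplying by $2\sqrt{\betaoff}\norm{a_*}_{V_0^{-1}}$, with $\sqrt{\betaoff}=\tilde O(\sqrt d)$, gives the second term $d^{2.5}\norm{a_*}_{V_0^{-1}}/(\alpha S(\Delta_{\text{off}}+\alpha S))$.

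The care needed here is in handling the $\lcb_\tau$ versus $\lcb_s$ indices and the extra term $\lcb_\tau(U_\tau)$ in \cref{eq:budget}.

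\textbf{Corollary.} Assume $\lambda_{\min}(V_0)=\Omega(mL^2/d)$. The log factor is then $\log(1+nd/m)$, and $\norm{a_*}_{V_0^{-1}}\le L/\sqrt{\lambda_{\min}}=O(\sqrt{d/m})$. The first term satisfies $\sqrt{n\rho_nd_{\text{eff}}\log(1+nd/m)}\le\sqrt{n\rho_nd_{\text{eff}}\cdot nd/m}=n\sqrt{\rho_ndd_{\text{eff}}/m}$, using $\log(1+x)\le x$. For the second term, recall $S=\sqrt{d\betaoff/m}$. Substituting this together with $\Delta_{\text{off}}+\alpha S\gtrsim\sqrt{\rho_nd/m}$ makes the second term $\tilde O(d^{2.5}\sqrt{d/m}\,m/(\alpha d\sqrt{\rho_n}\cdots))$, which is of lower order than the first term (for constant $\alpha$ and $n\gtrsim m$). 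This gives the claimed $O(n\sqrt{\rho_ndd_{\text{eff}}/m})$.
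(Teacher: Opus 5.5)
The first bound in your proposal follows the paper's proof. You use the same regret decomposition and the same inequality at the last LCB round $\tau$, with each earlier LCB round worth at least $\alpha S$. You also maximize $c\sqrt{N\rho_n d\log(\cdot)}-N(\Delta_{\text{off}}+\alpha S)$ over $N$ in the same way. Going through $\mu_{U_s}$ instead of the optimistic value $\ip{\tilde\theta_s,U_s}$ changes nothing.

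\textbf{Main gap: the second claim.} You substitute the two conditions into the already-maximized count $|\cL_n|=\tilde O(d^2/(\alpha S(\Delta_{\text{off}}+\alpha S)))$. This leaves an LCB term of order $\tilde O(d\sqrt{m}/\alpha)$, which you can only discard by assuming $\alpha$ constant and $n\gtrsim m$. Neither assumption is in the statement. Moreover, $n\gtrsim m$ excludes exactly the regime where a per-round rate of order $1/\sqrt{m}$ is the point.

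The missing idea is that under these conditions the LCB branch switches off. The factor you absorbed into $\tilde O$, namely $\log(1+NL^2/(d\lambda_{\min}(V_0)))$, is $\lesssim N/m$ when $\lambda_{\min}(V_0)=\Omega(mL^2/d)$. (More directly, every $\norm{U_s}_{V_{s-1}^{-1}}\le L/\sqrt{\lambda_{\min}(V_0)}=O(\sqrt{d/m})$.) So the right-hand side of your counting inequality is linear in $N$, of the form $N(C\sqrt{d\rho_n/m}-(\Delta_{\text{off}}+\alpha S))$. It is nonpositive once the margin condition holds with a suitable constant. Hence $\alpha S(|\cL_n|-1)\le 0$, so at most one LCB round is ever played; the paper, writing $B(\tau)<0$, gets none. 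The regret then reduces to the UCB term, and $\log(1+x)\le x$ turns that into $O(n\sqrt{\rho_n dd_{\text{eff}}/m})$.

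There is an even shorter fix using what you already have. Since $\norm{a_*}_{V_0^{-1}}=O(\sqrt{d/m})$, the trivial count $|\cL_n|\le n$ bounds the LCB term by $O(n\sqrt{\betaoff d/m})$. This is within the claimed rate because $\betaoff\le\rho_n$ and $d_{\text{eff}}\ge 1$.

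\textbf{Secondary error: the $d_{\text{eff}}$ step.} As written, it is wrong. Bounding each direction $i>d_{\text{eff}}$ separately by $nL^2/\lambda_i(V_0)$ and summing gives only $(d-d_{\text{eff}})\,nL^2/\lambda_{d_{\text{eff}}+1}(V_0)<(d-d_{\text{eff}})\,d_{\text{eff}}\log(\cdot)$. This sum can genuinely be that large, e.g.\ when $\lambda_i(V_0)=\lambda_{d_{\text{eff}}+1}(V_0)$ for all $i>d_{\text{eff}}$, so it is not $O(d_{\text{eff}}\log(\cdot))$.

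The argument of Valko et al.\ needs a shared budget instead:
\begin{itemize}
\item Apply Hadamard's inequality in an eigenbasis $(u_i)$ of $V_0$: $\log(\det V_n/\det V_0)\le\sum_i\log(1+t_i/\lambda_i(V_0))$, where $t_i=\sum_s\ip{u_i,A_s}^2$ and $\sum_i t_i\le nL^2$.
\item Each of the first $d_{\text{eff}}$ terms is at most $\log(1+nL^2/\lambda_1(V_0))$.
\item For the rest, $\sum_{i>d_{\text{eff}}}t_i/\lambda_i(V_0)\le nL^2/\lambda_{d_{\text{eff}}+1}(V_0)<d_{\text{eff}}\log(1+nL^2/\lambda_1(V_0))$.
\end{itemize}
This last step uses the definition of $d_{\text{eff}}$ only at the single index $d_{\text{eff}}+1$.
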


\begin{remark}[Comparison with LinUCB]
 The regret bound of \alg consists of two terms: the first corresponds to the regret due to UCB actions, and the second corresponds to the regret from LCB actions. The first term is sublinear in the horizon $n$ and depends on the offline data through $d_{\text{eff}}$. If the offline data is not well-covered ($d_{\text{eff}} = d$), we recover the standard $\tilde{O}(d\sqrt{n})$ bound for LinUCB~\citep{abbasi2011improved}. However, when there is good offline coverage with large $m$, the standard LinUCB bound improves to $O\left( n\sqrt{\rho_n dd_{\text{eff}}/m} \right)$. Moreover, the second term is independent of $n$ (up to log factors), and hence not growing with online data. Therefore, overall, the regret of \alg is comparable to LinUCB up to an additive constant.
\end{remark}

\begin{remark}[Analysis of $\alpha$]\label{rem:alpha}

When $\alpha$ is large, the budget is more likely to be positive and \alg behaves closer to LinUCB. From \Cref{thm:linoto-reg}, the regret bound in this case is similar to that of LinUCB. When $\alpha$ is small, then \alg mostly plays the LCB actions and its regret corresponding to the LCB plays increases as shown in \Cref{thm:linoto-reg}. A more refined analysis of how to tune $\alpha$ is left for future work.

\end{remark}

\subsection{Regret relative to $\muoff$}

Now we turn to $R_n^{\text{off}}$, the regret of \alg relative to the offline reference $\muoff$. The intuition here is that choosing the LCB action incurs a constant regret per round (from \cref{prop:loff-subopt} and the monotonicity of LCB values), and the confidence widths decrease whenever the UCB action is chosen (from \cref{eq:confidence_set}). Therefore, the cumulative regret relative to $\muoff$ is of the same order as the offline optimal LinLCB.
\begin{theorem}\label{thm:linoto-offreg}
    With probability at least $1 - \delta$, we have
    \begin{align*}
        R_n^{\text{off}}(\alg) = O\left( (\alpha+2) \sqrt{\frac{d \rho_1}{m}} n \right)\,.
    \end{align*}
\end{theorem}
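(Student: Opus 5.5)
We work on the intersection of the event $\cE=\{\theta_*\in\cC_t \text{ for all } t\in[n]\}$ from \Cref{prop:total_confidence_set} and the event of \Cref{prop:loff-subopt}; after splitting $\delta$ between them this has probability at least $1-\delta$. Note that $\rho_1=\betaoff$ because $\beta_1=0$. Hence \Cref{prop:loff-subopt} reads $\muoff-\mu_{\lcbaoff}\le 2\sqrt{d\rho_1/m}=2S$. The plan is to split $R_n^{\text{off}}$ into a term measured against the baseline $r_b$ and the gap between $r_b$ and $\muoff$:
\begin{align*}
R_n^{\text{off}}(\alg)=\sum_{t=1}^n\left(\muoff-r_b\right)+\sum_{t=1}^n\left(r_b-\mu_{A_t}\right).
\end{align*}

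\textbf{The gap to the baseline.} On $\cE$ we have $\theta_*\in\cC_1$, so $\lcb_1(\lcbaoff)\ge \mu_{\lcbaoff}-2\sqrt{\rho_1}\norm{\lcbaoff}_{V_0^{-1}}$. However, this lower bound goes in the wrong direction for bounding $\muoff-r_b$ from above. What we need is an upper bound on $\mu_{\lcbaoff}-\lcb_1(\lcbaoff)$. I would reuse the technical step behind \Cref{prop:loff-subopt}, namely that the offline-weighted average width satisfies $\frac1m\sum_a m_a\norm{a}_{V_0^{-1}}\le\sqrt{d/m}$ (by Cauchy–Schwarz and $\sum_a m_a\norm{a}^2_{V_0^{-1}}=\tr(I)=d$). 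Since $\lcbaoff$ maximizes $\lcb_1$, we get
\begin{align*}
\lcb_1(\lcbaoff)\ge\frac1m\sum_a m_a\lcb_1(a)\ge\frac1m\sum_a m_a\left(\mu_a-2\sqrt{\rho_1}\norm{a}_{V_0^{-1}}\right)\ge\muoff-2S.
\end{align*}
Therefore $\muoff-r_b\le(2+\alpha)S$, and the first sum is at most $(\alpha+2)Sn$.

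\textbf{The budget argument.} It remains to show $\sum_t(r_b-\mu_{A_t})\le O(1)$, or at least $O(Sn)$. Let $\tau$ be the last round with $A_\tau=U_\tau$. At that round $B(\tau)>0$, and on $\cE$ we have $\lcb_\tau(a)\le\mu_a$ because $\lcb$ is a max over infima over sets containing $\theta_*$. So
\begin{align*}
\sum_{s\le\tau}\mu_{A_s}\ge\sum_{s<\tau}\lcb_\tau(A_s)+\lcb_\tau(U_\tau)>\tau r_b.
\end{align*}
Every round after $\tau$ plays some $L_t$, and on $\cE$,
\begin{align*}
\mu_{L_t}\ge\lcb_t(L_t)\ge\lcb_1(\lcbaoff)=r_b+\alpha S\ge r_b,
\end{align*}
using the running max and the fact that $L_t$ maximizes $\lcb_t$. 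If no UCB round ever occurs, the second inequality alone gives the bound. In both cases $\sum_t(r_b-\mu_{A_t})\le0$, so $R_n^{\text{off}}\le(\alpha+2)Sn=(\alpha+2)\sqrt{d\rho_1/m}\,n$.

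The main obstacle is the first step, bounding $\muoff-\lcb_1(\lcbaoff)$ rather than only $\muoff-\mu_{\lcbaoff}$. This needs the averaged-width argument from \Cref{prop:loff-subopt} applied to the $\lcb$ values themselves. The budget step is clean only because $\lcb_t$ is monotone and is a valid lower bound on $\cE$.
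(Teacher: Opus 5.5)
Your proof is correct and is essentially the paper's argument: the bound $r_b\ge\muoff-(\alpha+2)S$ via the offline-weighted width inequality is exactly \Cref{lem:lower_bound_on_baseline}. Your last-UCB-round step is the paper's budget-nonnegativity argument, which it runs by backward iteration on $\sum_{t\le n}\lcb_n(A_t)-nr_b$ using monotonicity of $\lcb_t$ and the $\alpha S$ surplus of LCB rounds, and decomposing directly against $r_b$ gives the sharper constant $(\alpha+2)Sn$ instead of the paper's $2(\alpha+2)Sn$ (the separate event of \Cref{prop:loff-subopt} is never actually used beyond $\cE$).
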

To compare the performance of \alg against LinLCB, we next state the regret of LinLCB relative to $\muoff$.
\begin{proposition}\label{prop:lcb-offreg}
    With probability $1 - \delta$, we have
    \begin{align*}
        R_n^{\text{off}}(\text{LinLCB}) =  O\left( \sqrt{\frac{d \rho_n}{m}}n \right) \,.
    \end{align*}
\end{proposition}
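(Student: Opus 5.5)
The plan is to reduce the bound to a per-round statement and then sum over the $n$ rounds. LinLCB plays $A_t = L_t = \argmax_{a} \lcb_t(a)$ in every round. It therefore suffices to show that, on a high-probability event, $\muoff - \mu_{L_t} \le c\sqrt{d\rho_n/m}$ for every $t \in [n]$; summing then gives the claim. I work on the event $\cE = \{\theta_* \in \cC_t \text{ for all } t\in[n]\}$ from \Cref{prop:total_confidence_set}, which holds with probability $1-\delta$.

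\textbf{Step 1 (LCB values are valid and monotone).} On $\cE$, every $\theta_*\in\cC_{t'}$, so $\inf_{\theta\in\cC_{t'}}\ip{\theta,a}\le \mu_a$ for all $t'$. Taking the maximum over $t'\le t$ gives $\lcb_t(a)\le\mu_a$. The running maximum also makes $\lcb_t(a)$ nondecreasing in $t$. Hence
\[
\mu_{L_t}\ge \lcb_t(L_t)\ge \lcb_t(\lcbaoff)\ge \lcb_1(\lcbaoff),
\]
so $\mu_{L_t} \ge \lcb_1(\lcbaoff)$ for every $t$. The problem is thereby reduced to the offline round $t=1$.

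\textbf{Step 2 (offline guarantee).} I need $\muoff - \lcb_1(\lcbaoff) \lesssim \sqrt{d\rho_1/m}$. I would reproduce the argument behind \Cref{prop:loff-subopt}, stated with $\lcb_1$ in place of $\mu_{\lcbaoff}$. Let $\bar a = \frac1m\sum_a m_a a$, so that $\muoff = \ip{\theta_*,\bar a}$.

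First, by definition of $\lcbaoff$ as the maximizer of $\lcb_1$,
\[
\lcb_1(\lcbaoff)\ge \frac1m\sum_a m_a\,\lcb_1(a)\ge \frac1m \sum_a m_a\bigl(\mu_a - 2\sqrt{\rho_1}\norm{a}_{V_0^{-1}}\bigr),
\]
where the second inequality uses that on $\cE$ the width of $\cC_1$ in direction $a$ is at most $2\sqrt{\rho_1}\norm{a}_{V_0^{-1}}$. This gives
\[
\muoff-\lcb_1(\lcbaoff)\le \frac{2\sqrt{\rho_1}}{m}\sum_a m_a\norm{a}_{V_0^{-1}}.
\]

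Next, the ``average uncertainty'' bound follows from Cauchy--Schwarz:
\[
\sum_a m_a\norm{a}_{V_0^{-1}}\le \sqrt{m\sum_a m_a a^\top V_0^{-1}a} = \sqrt{m\,\tr(V_0^{-1}V_0)}=\sqrt{md}.
\]
Combining the two displays yields $\muoff-\lcb_1(\lcbaoff)\le 2\sqrt{d\rho_1/m}$. Since $\rho_1 = \betaoff \le \rho_n$, this is at most $2\sqrt{d\rho_n/m}$.

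\textbf{Step 3 (sum).} Steps 1--2 give $\muoff-\mu_{L_t}\le 2\sqrt{d\rho_n/m}$ for every $t$, all on the single event $\cE$. Summing over $t\in[n]$ gives $R_n^{\text{off}}(\text{LinLCB})\le 2n\sqrt{d\rho_n/m}$ with probability $1-\delta$.

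The main obstacle is Step 1, specifically the monotonicity. Without the running maximum in the definition of $\lcb_t$, the data LinLCB collects online could lower the LCB of $\lcbaoff$ below its offline value. The $\max_{t'\le t}$ prevents this, but only on $\cE$, which requires $\theta_*\in\cC_{t'}$ simultaneously for all $t'$ and hence relies on the uniform-in-time guarantee of \Cref{prop:total_confidence_set}. The trace identity in Step 2 is routine by comparison.
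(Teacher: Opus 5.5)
Your proof is correct. It uses the same ingredients as the paper's proof: validity of the LCB on $\cE$, the ``max $\ge$ weighted average'' step, the ellipsoid width bound $2\sqrt{\rho}\norm{a}_{V^{-1}}$, and the trace identity of \Cref{lem:norm_bound}. The order of the steps differs slightly, though.

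The paper applies max $\ge$ average directly at round $t$, i.e.\ $\lcb_t(L_t)\ge\frac1m\sum_a m_a\lcb_t(a)$. It then handles the running maximum inside each $\lcb_t(a)$ by bounding every term uniformly, using $\rho_{t'}\le\rho_t$ and $V_{t'-1}^{-1}\preceq V_0^{-1}$. This yields $\muoff-\mu_{L_t}\le 2\sqrt{d\rho_t/m}$.

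You instead first reduce to the offline round through $\lcb_t(L_t)\ge\lcb_t(\lcbaoff)\ge\lcb_1(\lcbaoff)$. After that, the rest is exactly the argument of \Cref{prop:loff-subopt}, or equivalently \Cref{lem:lower_bound_on_baseline} with $\alpha=0$. This is cleaner and gives the sharper bound $2n\sqrt{d\rho_1/m}$; you only relax to $\rho_n$ at the end to match the statement. The $\rho_1$ form matches the $\rho_1$ in \Cref{thm:linoto-offreg}. So the $\log n$ gap between $\rho_1$ and $\rho_n$, which the paper mentions when comparing the two results, comes from its proof technique rather than from LinLCB itself.

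One small correction to your closing remark concerns what needs $\cE$. Monotonicity of $\lcb_t(a)$ in $t$ holds deterministically, because of the running maximum. What needs $\cE$ uniformly over all $t'\le t$ is the validity $\lcb_t(a)\le\mu_a$. Without it, the maximum could be attained on a set $\cC_{t'}$ that does not contain $\theta_*$.
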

From \Cref{thm:linoto-offreg} and \Cref{prop:lcb-offreg}, we observe that the regret of \alg is of the same order as LinLCB as $\rho_1$ and $\rho_n$ differ only by a factor of $\log n$. Note that $R_n^{\text{off}}$ scales linearly with the online horizon $n$ because LinLCB does not explore. However, the per-round regret vanishes as $O(1/\sqrt{m})$. Moreover, there is a Pareto tradeoff between $R_n(\alg)$ and $R_n^{\text{off}}(\alg)$ in terms of $\alpha$. For large $\alpha$, say $O(\sqrt{m})$, the second term in \Cref{thm:linoto-reg} becomes smaller improving the online regret $R_n$. However, $R_n^{\text{off}}$ becomes a constant in $m$. On the other hand, setting $\alpha = O(1)$ preserves the $1/\sqrt{m}$ decay of $R_n^{\text{off}}$ at the cost of a larger $R_n$ bound.

\section{Experimental Results}\label{sec:experiments}
\looseness=-1
In this section, we evaluate the empirical performance of \alg on synthetic linear bandit tasks to demonstrate its competitiveness against  warm-started LinUCB, LinLCB, and a random policy. For the default setting, we generate $|\cA| = 100$ actions uniformly over unit sphere in dimension $d=20$. The true parameter is also generated randomly over the unit sphere and $\alpha$ is set to $10^{-4}$. Both the offline and the online reward noises are sampled from standard normal distribution. The offline dataset is constructed with $m=100$ samples evenly distributed over 10 suboptimal actions. We add a small regularization term of $10^{-4}$ to the offline covariance matrix to ensure invertibility.

\begin{figure}[t!]
    \centering
    \begin{subfigure}{\textwidth}
        \centering
        \includegraphics[width=\linewidth]{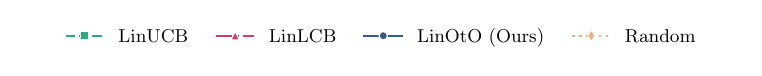}
        \label{fig:legend}
    \end{subfigure}
    
    \begin{subfigure}{\textwidth}
        \centering
        \includegraphics[width=\linewidth]{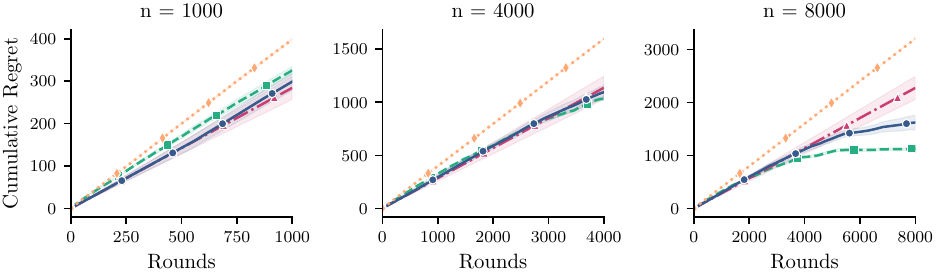}
        \caption{Cumulative regret across varying horizon lengths. For a smaller horizon ($n=1000$), the performance of \alg closely tracks LinLCB due to limited initial exploration. However, for a larger horizon ($n=8000$), \alg successfully explores and achieves a sublinear regret profile matching LinUCB.}
        \label{fig:n}
    \end{subfigure}

    \vspace{0.8em} 

    \begin{subfigure}{\textwidth}
        \centering
        \includegraphics[width=\linewidth]{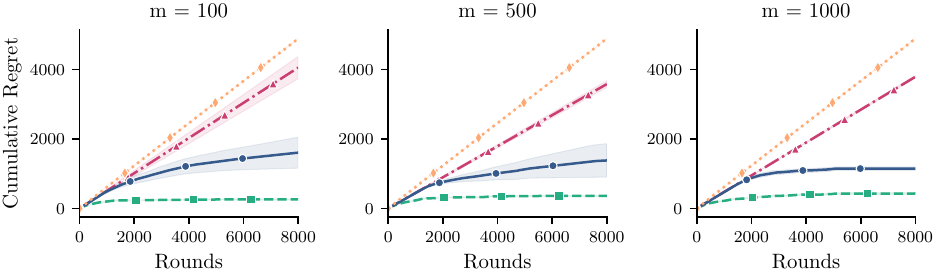}
        \caption{Cumulative regret for varying offline dataset sizes with no optimal actions in the offline dataset. We vary the number of offline samples $m$ (excluding samples of the optimal action) while fixing the horizon to $n=8000$. For $m=1000$, we set $\alpha=0.01$. \alg achieves a regret bounded between LinUCB and LinLCB, converging to the performance of LinUCB for large $m$}
        \label{fig:m_subopt}
    \end{subfigure}

    \vspace{0.8em}

    \begin{subfigure}{\textwidth}
        \centering
        \includegraphics[width=\linewidth]{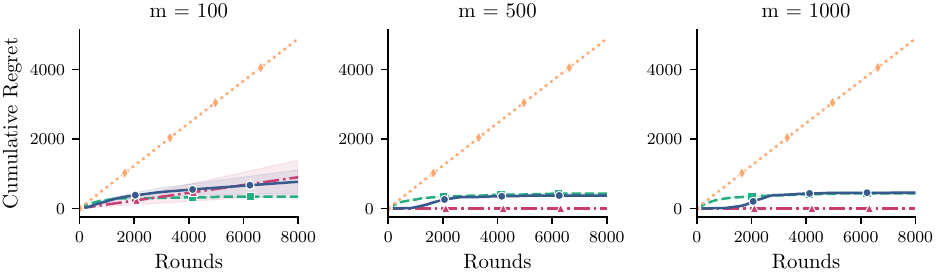}
        \caption{Cumulative regret for varying offline sample size $m$ with optimal actions in the offline dataset. LinLCB performs best by immediately exploiting the high-quality offline data. Both LinUCB and \alg incur some exploration penalty but quickly converge.}
        \label{fig:m_opt}
    \end{subfigure}
    \caption{Cumulative regret of LinUCB, LinLCB, \alg, and a random policy (Random) as a function of number of online rounds. The shaded area is the standard error measured over 5 independent runs for all the algorithms.
    }
    \label{fig:sim_results}
\end{figure}

\looseness=-1
\paragraph{Varying the Horizon Length.} In \Cref{fig:n}, we fix the offline dataset size to $m=100$ and vary the online horizon length $n \in \{1000, 4000, 8000\}$ which is an input to \Cref{alg:lin_oto}, and plot the cumulative pseudo-regret $R_n$ over rounds. We observe that in the early rounds, \alg successfully leverages the offline data to maintain a low initial regret, closely tracking the safe baseline of LinLCB. As $n$ increases and the exploration budget permits, \alg transitions to online exploration. It eventually achieves a sublinear regret similar to LinUCB. In contrast, LinLCB continues to follow the conservative baseline and therefore incurs linear regret.

\looseness=-1
\paragraph{Varying Offline Data Size (Excluding Optimal Action).}
Next, we examine the impact of the offline data size $m \in \{100, 500, 1000\}$ in a setting where the dataset excludes the optimal action. In this regime, the offline reference action $\lcbaoff$ can be strictly suboptimal. \Cref{fig:m_subopt} plots the cumulative regret for $n = 8000$ with different offline sample sizes $m$. Because LinLCB does not explore, it commits to a suboptimal safe action and therefore incurs linear regret. In contrast, \alg uses its exploration budget to safely deviate from the suboptimal offline prior. Although its performance initially resembles that of LinLCB, it eventually explores and achieves sublinear regret. When $m=1000$, the performance of \alg improves further and becomes closer to that of LinUCB, consistent with the theoretical insights discussed in \Cref{sec:regret-analysis}.

\looseness=-1
\paragraph{Varying Offline Data Size (Including Optimal Action).}
In \Cref{fig:m_opt}, we evaluate the performance of \alg when the offline dataset contains samples from the optimal action. The performance of all algorithms (except Random) improves as the number of offline samples $m$ increases. As expected, LinLCB quickly identifies the optimal action and achieves (near) zero regret. Both \alg and LinUCB perform a small amount of exploration but rapidly converge to the optimal action. Additional experimental results are provided in \Cref{app:more-expts}.

\section{Conclusion}
In this work, we studied the problem of offline-to-online learning in the stochastic linear bandit model. By carefully switching between the LinUCB and LinLCB strategies, we developed the algorithm \alg, which is designed to achieve strong performance relative to both online learning and the offline baseline. Theoretically, we established regret bounds showing that \alg is simultaneously competitive with both LinUCB and LinLCB. We also validated these theoretical guarantees through empirical evaluations across a range of environmental parameters.

A primary limitation of the current framework is the assumption of fixed-design offline data, rather than data collected through an adaptive logging policy. Moreover, the offline reference does not account for the coverage properties of the offline dataset. Consequently, the offline regret bounds scale with the total number of offline samples rather than with a more refined coverage-dependent quantity. Addressing these limitations remains an important open problem. Additional future directions include studying distribution shift between offline data and online interactions, as well as extending the framework to nonlinear function approximation settings.

\appendix

\section{Confidence Sets Construction}\label{app:conf-sets}
We start by constructing the confidence set $\cC_1$ from the offline data. The least squares estimate $\hat \theta_0$ of $\theta_*$ is given by
\begin{align*}
    \hat \theta_0 = V_0^{-1} \sum_{a \in \cA} \sum_{i=1}^{m_a} X_i^{(a)} a \,,
\end{align*}
where $V_0 = \sum_{a \in \cA} m_a a a^\top$ is the offline covariance matrix. To get the confidence set around $\hat{\theta}_0$, we use the following lemma, which can be obtained using concentration of subgaussian random variables and a covering argument \citep[Equation 20.3]{lattimore2020bandit}.
\begin{lemma}[\citet{lattimore2020bandit}]\label{lem:off-conc}
    For all $\delta_0 \in (0,1)$, we have
    \begin{align*}
        \PP\left( \norm{\sum_{a \in \cA} \sum_{i=1}^{m_a} \eta_i^{(a)} a}_{V_0^{-1}} \ge \sqrt{\betaoff(\delta_0)} \right) \le \delta_0, 
    \end{align*}
\end{lemma}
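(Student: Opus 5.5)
The statement to prove is \Cref{lem:off-conc}: with probability at least $1-\delta_0$, the self-normalized offline noise $\norm{S_{\mathrm{off}}}_{V_0^{-1}}$ stays below $\sqrt{\betaoff(\delta_0)}$, where $S_{\mathrm{off}} \coloneqq \sum_{a \in \cA}\sum_{i=1}^{m_a} \eta_i^{(a)} a$ and $\betaoff(\delta_0) = 8d\log 6 + 8\log(1/\delta_0)$.

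The plan exploits the fixed design. Because the counts $m_a$ are deterministic and the noises $\eta_i^{(a)}$ are independent, there is no adaptivity to handle. The argument has three steps: reduce the weighted norm to a Euclidean norm, apply a subgaussian tail bound in each fixed direction, and take a union bound over a covering net.

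\textbf{Step 1 (reduction to a sphere).} Since $V_0$ is invertible, set $W \coloneqq V_0^{-1/2} S_{\mathrm{off}}$. Then $\norm{S_{\mathrm{off}}}_{V_0^{-1}} = \norm{W}_2 = \sup_{\norm{u}_2 = 1} \ip{u, W}$.

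\textbf{Step 2 (fixed direction).} Fix a unit vector $u$. We have
\[
\ip{u,W} = \sum_{a}\sum_{i} \eta_i^{(a)} \ip{V_0^{-1/2}u, a}.
\]
This is a sum of independent subgaussian terms. Its variance proxy is
\[
\sum_a m_a \ip{V_0^{-1/2}u,a}^2 = u^\top V_0^{-1/2} V_0 V_0^{-1/2} u = 1,
\]
so $\ip{u,W}$ is $1$-subgaussian. Hence $\PP(\ip{u,W} \ge x) \le \exp(-x^2/2)$.

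\textbf{Step 3 (covering argument).} Let $\cC_\epsilon$ be an $\epsilon$-net of the unit sphere with $\epsilon = 1/2$; such a net exists with $|\cC_\epsilon| \le (3/\epsilon)^d = 6^d$. Pick $u^* = W/\norm{W}_2$ and $u \in \cC_\epsilon$ with $\norm{u - u^*}_2 \le 1/2$. Then
\[
\norm{W}_2 = \ip{u^*, W} \le \ip{u,W} + \tfrac12 \norm{W}_2,
\]
so $\norm{W}_2 \le 2 \max_{u \in \cC_\epsilon} \ip{u,W}$.

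A union bound then gives
\[
\PP\bigl(\norm{W}_2 \ge 2x\bigr) \le 6^d \exp(-x^2/2).
\]
Setting the right-hand side equal to $\delta_0$ yields $x^2 = 2d\log 6 + 2\log(1/\delta_0)$. Therefore $\norm{W}_2 < 2x = \sqrt{8d\log 6 + 8\log(1/\delta_0)}$ with probability at least $1-\delta_0$, which matches $\betaoff(\delta_0)$ exactly.

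The only delicate point is Step 3. The net argument needs the direction $u^*$ to depend on $W$, which is why the supremum is controlled through the finite net rather than a single direction. I would state the net bound $|\cC_\epsilon| \le (3/\epsilon)^d$ for the sphere directly, and the constants then line up with the stated $\betaoff$. Everything else follows immediately from independence and the fixed design.
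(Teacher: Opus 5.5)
Your proof is correct and takes the same route as the paper. The paper gives no proof of its own: it cites the fixed-design bound of \citet{lattimore2020bandit} (their Equation 20.3), which is derived exactly as you do, via a per-direction $1$-subgaussian tail for $\ip{u, V_0^{-1/2}\sum_{a}\sum_{i}\eta_i^{(a)} a}$, a $1/2$-net of the unit sphere of size at most $6^d$, and a union bound, giving precisely $\sqrt{8d\log 6 + 8\log(1/\delta_0)}$.
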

where $\sqrt{\betaoff(\delta_0)} \coloneqq \sqrt{8d\log6 + 8\log(1/\delta_0)}$ is the offline confidence radius. The next lemma gives a uniform concentration guarantee for all $t \in [n]$ based on a martingale argument \citep[Theorem 20.4]{lattimore2020bandit}.
\begin{lemma}[\citet{lattimore2020bandit}]\label{lem:mart-conc}
For all $\delta_1 \in (0,1)$,
\begin{align*}
    \PP\left( \text{exists } t > 1 \,:\, \norm{\sum_{s=1}^t \eta_s A_s}_{V_t^{-1}} \ge \sqrt{\beta_{t+1}(\delta_1)} \right) \le \delta_1, 
\end{align*}
where $\sqrt{\beta_t(\delta_1)} \coloneqq \sqrt{2\log\left(2/\delta_1 \right) + \log\left( \det V_{t-1}/\det V_0 \right)}$ with $V_t = V_0 + \sum_{s=1}^t A_s A_s^\top$.
\end{lemma}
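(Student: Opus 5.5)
We will prove \Cref{lem:mart-conc} by reducing it to the self-normalized bound for vector-valued martingales \citep[Theorem 20.4]{lattimore2020bandit}, with the offline covariance $V_0$ playing the role of the regularizer. Set $S_t = \sum_{s=1}^t \eta_s A_s$ and $\bar V_t = \sum_{s=1}^t A_s A_s^\top$, so that $V_t = V_0 + \bar V_t$. The filtration is $\cF_t = \sigma(\cD, A_1, X_1, \dots, A_{t-1}, X_{t-1}, A_t)$. Under it, $A_t$ is $\cF_t$-measurable and $\eta_t$ is conditionally $1$-subgaussian given $\cF_t$.

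Since $V_0$ is $\sigma(\cD)$-measurable, it is $\cF_1$-measurable. We also assume it is deterministic (fixed design) and positive definite (invertible).

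The first step is to build the usual supermartingale. For each $x \in \R^d$, define
\[
M_t(x) = \exp\left( \ip{x, S_t} - \tfrac12 \norm{x}_{\bar V_t}^2 \right).
\]
Conditional subgaussianity gives $\EE{M_t(x) \mid \cF_t} \le M_{t-1}(x)$. Hence $M_t(x)$ is a nonnegative supermartingale with $M_0 = 1$.

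The second step is the method of mixtures. Integrate $M_t(x)$ against the Gaussian density $\mathcal N(0, V_0^{-1})$. Fubini preserves the supermartingale property, and completing the square gives the closed form
\[
\bar M_t = \left(\frac{\det V_0}{\det V_t}\right)^{1/2} \exp\left( \tfrac12 \norm{S_t}_{V_t^{-1}}^2 \right).
\]

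The third step makes the bound uniform in time. Ville's maximal inequality for nonnegative supermartingales (equivalently, optional stopping at $\tau = \min\{t : \bar M_t \ge 1/\delta'\}$) yields $\PP(\exists t : \bar M_t \ge 1/\delta') \le \delta'$. Rearranging, with probability at least $1-\delta'$ and for all $t$ simultaneously,
\[
\norm{S_t}_{V_t^{-1}}^2 \le 2\log(1/\delta') + \log\left(\det V_t/\det V_0\right).
\]
Take $\delta' = \delta_1/2$. The right-hand side is then exactly $\beta_{t+1}(\delta_1)$ as defined in the lemma, since $\beta_{t+1}$ involves $\det V_t$. With $\delta' = \delta_1/2$ the failure probability is actually $\delta_1/2 \le \delta_1$, so the stated factor $2/\delta_1$ is slack. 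Presumably the $2$ was reserved for a union bound with \Cref{lem:off-conc} in \Cref{prop:total_confidence_set}.

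The main obstacle is the mixture step when the regularizer is the random-but-$\cF_1$-measurable $V_0$ rather than a deterministic $\lambda I$. Two things must be checked. First, the Gaussian integral must be evaluated correctly with prior covariance $V_0^{-1}$, which is where the determinant ratio $\det V_0/\det V_t$ comes from. Second, conditioning on $\cD$ must not break the supermartingale argument. Because the offline design is fixed, $V_0$ is deterministic and nothing further is needed. If $V_0$ were merely $\cF_1$-measurable, we would apply the whole argument conditionally on $\cD$ and then take expectations. The remaining pieces, namely the supermartingale step and Ville's inequality, are routine.
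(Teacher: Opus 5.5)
Your proposal is correct and follows the same route as the paper, which gives no proof beyond invoking the self-normalized martingale bound of \citet[Theorem 20.4]{lattimore2020bandit}. You have unpacked that theorem's method-of-mixtures argument with prior $\mathcal{N}(0, V_0^{-1})$ in place of $\mathcal{N}(0,\lambda^{-1}I)$, which is what justifies replacing $\lambda^d$ by $\det V_0$ (valid because $V_0$ is deterministic under the fixed design), and your observation that the $2/\delta_1$ inside $\beta_t$ actually gives failure probability $\delta_1/2$ is also right.
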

Combining \cref{lem:off-conc,lem:mart-conc}, we get
\begin{lemma}\label{lem:on-off-conc}
    For all $\delta \in (0, 1)$, there exists $\delta_0, \delta_1 \in (0,1)$ such that
    \begin{align*}
        \PP\left( \text{exists } t > 1 \,:\, \norm{\sum_{s=1}^t \eta_s A_s}_{V_t^{-1}} \ge \beta_{t+1}(\delta_1) \, \text{ or } \, \norm{\sum_{a \in \cA} \sum_{i=1}^{m_a} \eta_i^{(a)} a}_{V_0^{-1}} \ge \betaoff(\delta_0) \right) \le \delta \,.
    \end{align*}
\end{lemma}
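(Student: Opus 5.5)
The plan is a union bound. Since \Cref{lem:off-conc} and \Cref{lem:mart-conc} each give a failure probability, we only need to choose $\delta_0$ and $\delta_1$ so that the two failure probabilities add up to at most $\delta$.

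Concretely, take $\delta_0 = \delta_1 = \delta/2$. Define the offline failure event
\[
E_0 = \left\{ \norm{\sum_{a \in \cA} \sum_{i=1}^{m_a} \eta_i^{(a)} a}_{V_0^{-1}} \ge \sqrt{\betaoff(\delta_0)} \right\}
\]
and the online failure event
\[
E_1 = \left\{ \text{exists } t > 1 \,:\, \norm{\sum_{s=1}^t \eta_s A_s}_{V_t^{-1}} \ge \sqrt{\beta_{t+1}(\delta_1)} \right\}.
\]
By \Cref{lem:off-conc}, $\PP(E_0) \le \delta_0$. By \Cref{lem:mart-conc}, $\PP(E_1) \le \delta_1$. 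Hence $\PP(E_0 \cup E_1) \le \delta_0 + \delta_1 = \delta$.

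The event in the statement is exactly $E_0 \cup E_1$, up to the notational point discussed next, so its probability is at most $\delta$. This choice also explains the constants in \Cref{prop:total_confidence_set}. We get $\sqrt{\betaoff(\delta/2)} = \sqrt{8d\log 6 + 8\log(2/\delta)}$. The online radius involves $\log(2/\delta_1)$, and with $\delta_1 = \delta/2$ this becomes $\log(4/\delta)$. Absorbing that constant, or instead choosing a split such as $\delta_1 = \delta$ applied to a version of \Cref{lem:mart-conc} that already has $\delta/2$ built in, yields the stated $2\log(2/\delta)$. In any case, the existence claim only needs some valid pair $(\delta_0,\delta_1)$.

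The main point to check is notational rather than mathematical. The displayed statement compares the norms to $\beta_{t+1}(\delta_1)$ and $\betaoff(\delta_0)$, while the lemmas use their square roots. Write $\beta$ for either radius. Whenever $\beta \ge 1$, which holds for $\betaoff$ since $8d\log 6 > 1$, we have $\sqrt{\beta} \le \beta$, so the event $\{\norm{\cdot} \ge \beta\}$ is contained in $\{\norm{\cdot} \ge \sqrt{\beta}\}$ and the bound still holds. For the online radius, $\beta_{t+1}(\delta_1) \ge 2\log(2/\delta_1) \ge 2\log 2 > 1$ whenever $\delta_1 \le 1$, so the same containment applies. Either way, the square-root version, which is what is later used to derive \Cref{prop:total_confidence_set}, holds with probability at least $1-\delta$.

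No independence between the offline and online noise is needed, because the union bound holds regardless of dependence.
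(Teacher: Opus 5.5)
Your proposal is correct and matches the paper's proof exactly: set $\delta_0=\delta_1=\delta/2$ in \Cref{lem:off-conc} and \Cref{lem:mart-conc}, then take a union bound. Your extra remark is also valid and is a detail the paper leaves implicit: the thresholds in the statement (written without square roots) satisfy $\betaoff(\delta_0)\ge 1$ and $\beta_{t+1}(\delta_1)\ge 1$, so the stated failure event is contained in the square-root one.
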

\begin{proof}
The proof follows from \cref{lem:off-conc,lem:mart-conc} by choosing $\delta_0 = \delta_1 = \delta/2$ and a union bound.
\end{proof}
\subsection{Proof of \cref{prop:total_confidence_set}}
\begin{proof}
    Using the definition of $X_i^{(a)}$ and $X_s$, we have
    \begin{align*}
        \hat \theta_t - \theta_* &= V_t^{-1} \left( \sum_{a \in \cA} \sum_{i=1}^{m_a} X_i^{(a)}a + \sum_{s=1}^t X_s A_s \right) - \theta_* \\
        &= V_t^{-1} \left( \sum_{a \in \cA} \sum_{i=1}^{m_a} \left( \ip{\theta_*, a} + \eta_i^{(a)} \right) a + \sum_{s=1}^t (\ip{\theta_*, A_s} + \eta_s) A_s \right) - \theta_* \\
        &= V_t^{-1} \left( \sum_{a\in \cA} m_a aa^\top \theta_* + \sum_{s=1}^t A_s A_s^\top \theta_* + \sum_{a \in \cA} \sum_{i=1}^{m_a} \eta_i^{(a)} a + \sum_{s=1}^t \eta_s A_s \right) - \theta_* \\
        &= V_t^{-1} \left( V_t \theta_* + \sum_{a \in \cA} \sum_{i=1}^{m_a} \eta_i^{(a)} a + \sum_{s=1}^t \eta_s A_s \right) - \theta_* \\
        &= V_t^{-1} \left( \sum_{a \in \cA} \sum_{i=1}^{m_a} \eta_i^{(a)} a + \sum_{s=1}^t \eta_s A_s \right) \,.
    \end{align*}
    Noting that $\norm{B^{-1} x}_B = \norm{x}_{B^{-1}}$ for any positive definite matrix $B$, we have 
    \begin{align*}
        \norm{\hat \theta_t - \theta_*}_{V_t} &= \norm{\sum_{a\in\cA}\sum_{i=1}^{m_a}\eta_i^{(a)} a +\sum_{s=1}^t\eta_s A_s}_{V_t^{-1}} \\
        &\le \norm{\sum_{a\in\cA}\sum_{i=1}^{m_a}\eta_i^{(a)} a}_{V_t^{-1}} + \norm{\sum_{s=1}^t\eta_s A_s}_{V_t^{-1}} \\
        &\le \norm{\sum_{a\in\cA}\sum_{i=1}^{m_a}\eta_i^{(a)} a}_{V_0^{-1}} + \norm{\sum_{s=1}^t\eta_s A_s}_{V_t^{-1}} \tag{since $V_0 \preceq V_t$}\\
        &\le \sqrt{\rho_t} \,.
    \end{align*}
where the last step holds with probability at least $1-\delta$ from \cref{lem:on-off-conc}.
\end{proof}

\section{Baseline Construction}
Our baseline construction using \Cref{prop:loff-subopt} requires the following technical lemma.
\begin{lemma}\label{lem:norm_bound}
    Let $V_0 = \sum_{a \in \cA} m_a a a^\top$ and $m = \sum_{a \in \cA} m_a$. Then 
    \begin{align*}
        \frac{1}{m} \sum_{a \in \cA} m_a \norm{a}_{V_0^{-1}} \le \sqrt{\frac{d}{m}}.
    \end{align*}
\end{lemma}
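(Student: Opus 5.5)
The plan is to apply Cauchy--Schwarz (equivalently Jensen) with the weights $m_a/m$, and then use a trace identity to evaluate the resulting sum of squared norms exactly. Since $V_0$ is assumed invertible, $\norm{a}_{V_0^{-1}}$ is well defined for every $a\in\cA$, and the weights $m_a/m$ form a probability distribution over $\cA$.

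First, by concavity of the square root (Jensen's inequality with respect to the distribution $(m_a/m)_{a\in\cA}$),
\begin{align*}
    \frac{1}{m}\sum_{a\in\cA} m_a \norm{a}_{V_0^{-1}} \le \sqrt{\frac{1}{m}\sum_{a\in\cA} m_a \norm{a}_{V_0^{-1}}^2}\,.
\end{align*}
Equivalently, one can write $m_a\norm{a}_{V_0^{-1}} = \sqrt{m_a}\cdot\sqrt{m_a}\norm{a}_{V_0^{-1}}$ and apply Cauchy--Schwarz, which gives the factor $\sqrt{m}\cdot\sqrt{\sum_a m_a\norm{a}^2_{V_0^{-1}}}$.

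Second, I would compute the inner sum exactly via the trace. Since $\norm{a}_{V_0^{-1}}^2 = a^\top V_0^{-1} a = \tr(V_0^{-1} a a^\top)$, linearity of the trace gives
\begin{align*}
    \sum_{a\in\cA} m_a \norm{a}_{V_0^{-1}}^2 = \tr\Big(V_0^{-1}\sum_{a\in\cA} m_a a a^\top\Big) = \tr(V_0^{-1}V_0) = \tr(I_d) = d\,.
\end{align*}
Substituting this into the first display yields $\sqrt{d/m}$, which is the claimed bound.

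There is no real obstacle in this argument. The only point to watch is that the identity $\tr(V_0^{-1}V_0)=d$ requires $V_0$ to be invertible, which the paper's footnote assumes. If a regularizer $\lambda I$ is added to $V_0$ instead, the same computation gives a trace at most $d$, since $(V_0+\lambda I)^{-1}V_0 \preceq I$ in the sense of eigenvalues, so the inequality still holds.
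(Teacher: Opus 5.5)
Your proposal is correct and follows essentially the same route as the paper: Cauchy--Schwarz (equivalently Jensen) with weights $m_a/m$, then the trace identity $\sum_{a} m_a \norm{a}_{V_0^{-1}}^2 = \tr(V_0^{-1}V_0) = d$. Your remark that a regularized $V_0+\lambda I$ gives trace at most $d$, so the bound still holds, is also correct.
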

\begin{proof}
    First, using Cauchy-Schwarz inequality and $\sum_{a \in \cA} m_a/m = 1$, we have
    \begin{align*}
        \sum_{a \in \cA} \frac{m_a}{m} \norm{a}_{V_0^{-1}} \le \sqrt{\sum_{a \in \cA}  \frac{m_a}{m}} \sqrt{\sum_{a \in \cA} \frac{m_a}{m} \norm{a}_{V_0^{-1}}^2} = \sqrt{\sum_{a \in \cA} \frac{m_a}{m} \norm{a}_{V_0^{-1}}^2} \,.
    \end{align*}
    Using the cyclicity of trace, the last expression inside the square root can be rewritten as
    \begin{align*}
         \tr\left(\sum_{a \in \cA} \frac{m_a}{m} a^\top V_0^{-1} a \right) = \tr \left( V_0^{-1} \frac1m \sum_{a \in \cA} m_a a a^\top \right) = \frac1m \tr\left( V_0^{-1} V_0 \right) = \frac{d}{m}\,.
    \end{align*}
    Therefore $\sum_{a \in \cA} (m_a/m) \norm{a}_{V_0^{-1}} \le \sqrt{d/m}$.
\end{proof}

\subsection{Proof of \cref{prop:loff-subopt}}\label{app:proof-of-offline-budget}
\begin{proof}
    Since $\theta_* \in \cC_1$ with high probability, we have
\begin{align*}
    \mu_{\lcbaoff} &\ge \lcb_1(\lcbaoff) \\
    &\ge \frac1m \sum_{a \in \cA} m_a \inf_{\theta \in \cC_1} \ip{\theta, a} \tag{since max $\ge$ avg}\\
    &= \frac1m \sum_{a \in \cA} m_a \inf_{\theta \in \cC_1} \left( \ip{\theta_*, a} - \ip{\theta_* - \theta, a} \right) \\
    &= \muoff - \frac1m \sum_{a \in \cA} m_a \sup_{\theta \in \cC_1} \ip{\theta_* - \theta, a} \\
    &\ge \muoff - \frac{2\sqrt{\betaoff}}{m} \sum_{a \in \cA} m_a \norm{a}_{V_0^{-1}} \tag{by Cauchy-Schwarz} \\
    &\ge \muoff - 2\sqrt{\frac{d\betaoff}{m}}. \tag{by \cref{lem:norm_bound}}
\end{align*}
Rearranging the above proves the result.
\end{proof}

\bibliography{main}
\bibliographystyle{rlj}

\beginSupplementaryMaterials

\section{Regret Analysis of \alg}\label{app:proof-linoto-reg}
Our analysis is based on the proof of linear conservative bandits \citep{kazerouni2017conservative} with some key modifications since our baseline is chosen based on the offline data. We will assume that the event $\cE = \{\theta_* \in \cC_t \text{ for all } t \in [n]\}$ holds throughout, which happens with probability at least $1-\delta$ by \cref{prop:total_confidence_set}. 

\subsection{Proof of \cref{thm:linoto-reg}}
\begin{proof}
    Let $\cU_t$ be the set of rounds till round $t$ when the budget was non-negative, that is, when \alg chooses the UCB action. Similarly, let $\cL_t$ be the set of rounds when \alg chooses the LCB action. Then the regret of \alg can be decomposed as
\begin{align*}
    R_n(\alg) &= \sum_{t=1}^n \ip{\theta_*, a_*} - \ip{\theta_*, A_t} \\
    &= \sum_{t \in \cU_n} \left( \ip{\theta_*, a_*} - \ip{\theta_*, U_t} \right) + \sum_{t \in \cL_n} \left( \ip{\theta_*, a_*} - \ip{\theta_*, L_t} \right) \\
    &\le \sum_{t \in \cU_n} \left( \ip{\theta_*, a_*} - \ip{\theta_*, U_t} \right) + 2 \sqrt{\betaoff} \norm{a_*}_{V_0^{-1}} | \cL_n |. \tag{by \cref{lem:lcb-a*}}
\end{align*}
Now we bound $| \cL_n |$. Let $\tau \coloneqq \max\{ t \in [n] \,:\, A_t = L_t \}$ be the last round when the LCB action is chosen. Then $\cL_\tau = \cL_n$. Since $L_\tau$ was chosen in round $\tau$, we also have $B(\tau) < 0$. That is,
\begin{align*}
    \sum_{t=1}^{\tau-1} \lcb_\tau(A_t) + \lcb_\tau(U_\tau) - \tau r_b < 0 \,.
\end{align*}
Decomposing the sum over $\tau - 1$ into rounds when UCB and LCB actions are chosen, and distributing $\tau r_b$ over these two sets of rounds, the inequality in the last display can be rewritten as
\begin{align} \label{eq:budget_decomposition}
    \sum_{t \in \cU_{\tau-1}} \left( \lcb_\tau(A_t) - r_b \right) + \sum_{t \in \cL_{\tau-1}} \left( \lcb_\tau(A_t) - r_b \right) + \lcb_\tau(U_\tau) - r_b < 0 \,.
\end{align}
For all $t \in \cL_{\tau-1}$, the second term above can be lower bounded as follows:
\begin{align*}
    \lcb_\tau(A_t) - r_b &= \lcb_\tau(L_t) - r_b \tag{since $A_t = L_t$ for all $t \in \cL_{\tau-1}$}\\
    &\ge \lcb_t(L_t) - r_b \tag{since LCB values are non-decreasing} \\
    &\ge \lcb_t(\lcbaoff) - rb \tag{since $L_t = \argmax_{a \in \cA} \lcb_\tau(a)$} \\
    &\ge \lcb_1(\lcbaoff) - r_b \tag{since LCB values are non-decreasing} \\
    &= \alpha S \tag{by the definition of $r_b$ in \cref{eq:baseline_reward}} \,.
\end{align*}
Therefore, \cref{eq:budget_decomposition} becomes
\begin{align*}
    \sum_{t \in \cU_{\tau-1}} \left( \lcb_\tau(A_t) - r_b \right) + \sum_{t \in \cL_{\tau-1}} \alpha S + \lcb_\tau(U_\tau) - r_b < 0 \,.
\end{align*}
Since $\sum_{t \in \cL_{\tau-1}} \alpha S = |\cL_{\tau-1}| \alpha S$, we have\
\begin{align}\label{eq:lcb_iter_main}
    | \cL_{\tau-1} | \alpha S &< r_b - \lcb_\tau(U_\tau) + \sum_{t \in \cU_{\tau-1}} \left( r_b - \lcb_\tau(A_t) \right) \,.
\end{align}
Let $\tilde \theta_t \coloneqq \argmax_{\theta \in \cC_t} \ip{\theta, U_t}$ be the optimistic estimate of $\theta_*$ in round $t$. Then for all $t \in \cU_{\tau-1}$, we have 
\begin{align}
    \lcb_\tau(A_t) &= \lcb_\tau(U_t) \nonumber \\
    &= \max_{t' \le \tau} \inf_{\theta \in \cC_{t'}} \ip{\theta, U_t} \nonumber \\
    &= \max_{t' \le \tau} \inf_{\theta \in \cC_{t'}} -\ip{\theta_* - \theta, U_t} + \ip{\theta_*, U_t} \nonumber \\
    &= \max_{t' \le \tau} \inf_{\theta \in \cC_{t'}} -\ip{\theta_* - \theta, U_t} - \ip{\tilde \theta_t - \theta_*, U_t} + \ip{\tilde \theta_t, U_t} \label{eq:bounded_r_ellip} \\
    &\ge \max_{t' \le \tau} \inf_{\theta \in \cC_{t'}} -\norm{\theta_* - \theta}_{V_{t-1}} \norm{U_t}_{V_{t-1}^{-1}} - \norm{\tilde \theta_t - \theta_*}_{V_{t-1}} \norm{U_t}_{V_{t-1}^{-1}} + \ip{\tilde \theta_t, U_t} \nonumber \\
    &\ge \inf_{\theta \in \cC_t} - \norm{\theta_* - \theta}_{V_{t-1}} \norm{U_t}_{V_{t-1}^{-1}} - \norm{\tilde \theta_t - \theta_*}_{V_{t-1}} \norm{U_t}_{V_{t-1}^{-1}} + \ip{\tilde \theta_t, U_t} \nonumber \\
    &\ge -2\sqrt{\rho_t} \norm{U_t}_{V_{t-1}^{-1}} - 2\sqrt{\rho_t} \norm{U_t}_{V_{t-1}^{-1}} + \ip{\tilde \theta_t, U_t} \nonumber \\
    &\ge -4 \sqrt{\rho_\tau}\norm{U_t}_{V_{t-1}^{-1}} + \ip{\tilde \theta_t, U_t} \tag{since $\rho_\tau \ge \rho_t$ for all $t \le \tau$} \nonumber \,.
\end{align}
Since the mean rewards are between $[0,1]$, we can also bound $\lcb_\tau(A_t)$ from \Cref{eq:bounded_r_ellip} as $\lcb_\tau(A_t) \ge -4\sqrt{\rho_\tau} + \ip{\tilde \theta_t, U_t}$ to get
\begin{align*}
    \lcb_\tau(A_t) \ge -4\sqrt{\rho_\tau} \left(1 \land \norm{U_t}_{V_{t-1}^{-1}} \right) + \ip{\tilde \theta_t, U_t} \,,
\end{align*}
where $a \land b \coloneqq \min\{a, b\}$. Similarly, we can also bound $\lcb_\tau(U_\tau)$ as
\begin{align*}
    \lcb_\tau(U_\tau) = \max_{t' \le \tau} \inf_{\theta \in \cC_{t'}} \ip{\theta, U_\tau} \ge -4 \sqrt{\rho_\tau} \left(1 \land \norm{U_\tau}_{V_{\tau-1}^{-1}} \right) + \ip{\tilde \theta_\tau, U_\tau} \,.
\end{align*}
Substituting the previous two displays in \cref{eq:lcb_iter_main} and rearranging, we get 
\begin{align}\label{eq:lcb_iter_main_2}
    | \cL_{\tau-1} | \alpha S &< \sum_{t \in \cU_{\tau-1} \cup \{\tau\}} \left( r_b - \ip{\tilde \theta_t, U_t} + 4\sqrt{\rho_\tau} \left(1 \land \norm{U_t}_{V_{t-1}^{-1}} \right) \right) \,.
\end{align}

Note that for any $t \in [n]$,
\begin{align*}
    r_b - \ip{\tilde \theta_t, U_t} &= \lcb_1(\lcbaoff) - \alpha S - \ip{\tilde \theta_t, U_t}  \\
    &\le \ip{\theta_*, \lcbaoff} - \ip{\theta_*, a_*} - \alpha S \\
    &= -\Delta_{\text{off}} - \alpha S \,,
\end{align*}
where $\Delta_{\text{off}} = \ip{\theta_*, a_* - \lcbaoff}$ is the suboptimality gap of the offline LCB action $\lcbaoff$. So \cref{eq:lcb_iter_main_2} can be rewritten as
\begin{align*}
    | \cL_{\tau-1} | \alpha S &< -(\Delta_{\text{off}} + \alpha S)\left( |\cU_{\tau-1}|+1 \right)  + 4 \sqrt{\rho_\tau} \sum_{t \in \cU_{\tau-1} \cup \{\tau\}} \left(1 \land \norm{U_t}_{V_{t-1}^{-1}} \right) \,.
\end{align*}
Using Cauchy-Schwarz and the elliptical potential lemma \citep{abbasi2011improved,kazerouni2017conservative}, we have
\begin{align*}
    \sum_{t \in \cU_{\tau-1} \cup \{\tau\}} \left(1 \land \norm{U_t}_{V_{t-1}^{-1}} \right) &\le \sqrt{2\left(|\cU_{\tau-1}|+1 \right) d \log\left(1 + \frac{\left(| \cU_{\tau-1}|+1 \right)L^2}{d\det(V_0)^{1/d}}\right)} \,,
\end{align*}
which can be further bounded using $\det V_0 \ge (\lambda_{\min}(V_0))^d$ as
\begin{align*}
    \sum_{t \in \cU_{\tau-1} \cup \{\tau\}} \left(1 \land \norm{U_t}_{V_{t-1}^{-1}} \right) &\le \sqrt{2\left(|\cU_{\tau-1}|+1 \right) d \log\left(1 + \frac{\left(| \cU_{\tau-1}|+1 \right)L^2}{d \lambda_{\min}(V_0)}\right)} \,.
\end{align*}
Therefore
\begin{align}
    | \cL_{\tau-1} | \alpha S &< -(\Delta_{\text{off}} + \alpha S)\left( |\cU_{\tau-1}| + 1 \right) \nonumber \\
    & \quad + 4\sqrt{\rho_\tau} \sqrt{2 d\left( |\cU_{\tau-1}|+1 \right) \log\left(1 + \frac{\left(| \cU_{\tau-1}|+1 \right)L^2}{d \lambda_{\min}(V_0)}\right)} \nonumber \\
    & \le -(\Delta_{\text{off}} + \alpha S)\left( |\cU_{\tau-1}| + 1 \right) \nonumber \\
    & \quad + 4\sqrt{2 \rho_n \left(|\cU_{\tau-1}|+1 \right) d \log\left(1 + \frac{\left(| \cU_{\tau-1}|+1 \right)L^2}{d \lambda_{\min}(V_0)}\right)} \label{eq:lcb_rounds}\,.
\end{align}
Since $|\cU_{\tau-1}|+1 \le n$, the RHS of \Cref{eq:lcb_rounds} becomes a quadratic function in $\sqrt{|\cU_{\tau-1}|+1}$ and can be bounded by its maximum value. Therefore,
\begin{align*}
    |\cL_{\tau-1}| = \tilde O \left( \frac{d^2}{\alpha S ( \Delta_{\text{off}} + \alpha S)} \right) \,.
\end{align*}
Finally, since $|\cL_n| = |\cL_\tau| = |\cL_{\tau-1}|+1$, the regret corresponding to the LCB plays for Case 1 can be bounded as
\begin{align*}
    \sum_{t \in \cL_n} \left( \ip{\theta_*, a_*} - \ip{\theta_*, L_t} \right) &\le 2 \sqrt{\betaoff} \norm{a_*}_{V_0^{-1}} |\cL_{\tau-1}| \\
    &= \tilde O \left( \frac{d^{2.5} \norm{a_*}_{V_0^{-1}}}{\alpha S ( \Delta_{\text{off}} + \alpha S)} \right) \,.
\end{align*}
To bound the regret corresponding to the UCB plays, we can adapt the standard LinUCB analysis \citep{abbasi2011improved,lattimore2020bandit} and of \citep{valko2014spectral} to get
\begin{align}
    \sum_{t \in \cU_n} \left( \ip{\theta_*, a_*} - \ip{\theta_*, U_t} \right) &\le \sqrt{8 n \rho_n d_{\text{eff}} \log\left(1 + \frac{nL^2}{\lambda_{\min}(V_0)} \right)}\,, \label{eq:warm_ucb}
\end{align}
Therefore, with probability $1-\delta$, the total regret of \alg can be bounded as
\begin{align*}
    R_n(\alg) = \tilde{O}\left( \sqrt{n \rho_n d_{\text{eff}} \log\left(1 + \frac{nL^2}{\lambda_{\min}(V_0)} \right)} + \frac{d^{2.5} \norm{a_*}_{V_0^{-1}}}{\alpha S ( \Delta_{\text{off}} + \alpha S)} \right) \,.
\end{align*}
When $\lambda_{\min}(V_0) = \Omega(mL^2/d)$, using $\log(1+x) \le x$, we can upper bound \Cref{eq:lcb_rounds} as
\begin{align*}
    | \cL_{\tau-1} | \alpha S & \le -(\Delta_{\text{off}} + \alpha S)\left( |\cU_{\tau-1}| + 1 \right) \nonumber \\
    & \quad + 4\left(|\cU_{\tau-1}|+1 \right)\sqrt{2 d\rho_n/m} \,,
\end{align*}
which can be bounded by 0 for large enough $\alpha$ satisfying $\Delta_{\text{off}} + \alpha S \ge 4\sqrt{2d\rho_n/m}$. In this case, the regret for the UCB plays can be upper bounded using \Cref{eq:warm_ucb} as
\begin{align*}
    \sum_{t \in \cU_n} \left( \ip{\theta_*, a_*} - \ip{\theta_*, U_t} \right) &= O\left( n \sqrt{\frac{\rho_n dd_{\text{eff}}}{m} }\right)\,.
\end{align*}
We thus complete the proof of \cref{thm:linoto-reg}.
\end{proof}

\section{Offline-Reference Regret Analysis of \alg}
We start with the following lemma.
\begin{lemma}\label{lem:lower_bound_on_baseline}
    On the event $\cE$, it holds that $r_b \ge \muoff - (\alpha+2) S$.
\end{lemma}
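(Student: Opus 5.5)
The plan is to unfold the definition $r_b = \lcb_1(\lcbaoff) - \alpha S$ from \cref{eq:baseline_reward}. It then suffices to show that on $\cE$ we have $\lcb_1(\lcbaoff) \ge \muoff - 2S$, where $S = \sqrt{d\betaoff/m}$. This is essentially the chain of inequalities already used in the proof of \cref{prop:loff-subopt}, read off at the level of $\lcb_1(\lcbaoff)$ rather than $\mu_{\lcbaoff}$.

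First, since $\lcbaoff$ maximizes $\lcb_1$ over $\cA$ and a maximum dominates any weighted average, we get
\[
\lcb_1(\lcbaoff) \ge \frac{1}{m}\sum_{a\in\cA} m_a \inf_{\theta\in\cC_1}\ip{\theta,a}.
\]
Here $\lcb_1(a)=\inf_{\theta\in\cC_1}\ip{\theta,a}$, because the outer max ranges only over $t'=1$.

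Next, for each $a$ we write $\ip{\theta,a} = \ip{\theta_*,a} - \ip{\theta_*-\theta,a}$. On $\cE$ we have $\theta_*\in\cC_1$, so for every $\theta\in\cC_1$ the triangle inequality gives $\norm{\theta_*-\theta}_{V_0} \le 2\sqrt{\rho_1}$. Cauchy--Schwarz in the $V_0$ / $V_0^{-1}$ pairing then yields
\[
\inf_{\theta\in\cC_1}\ip{\theta,a} \ge \mu_a - 2\sqrt{\rho_1}\,\norm{a}_{V_0^{-1}}.
\]
Since $\beta_1=0$, we have $\sqrt{\rho_1}=\sqrt{\betaoff}$. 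Averaging with weights $m_a/m$ and invoking \cref{lem:norm_bound} gives $\lcb_1(\lcbaoff)\ge \muoff - 2\sqrt{\betaoff}\sqrt{d/m} = \muoff - 2S$. Subtracting $\alpha S$ gives the claim.

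The only delicate point is consistency of constants. The $\betaoff$ appearing in $S$ (and in the radius of $\cC_1$) must be the same quantity. \Cref{prop:loff-subopt} states it with $\log(1/\delta_1)$, while \cref{prop:total_confidence_set} uses $\log(2/\delta)$. I would fix $\betaoff$ to be the value used in $\cC_1$, so that $\rho_1=\betaoff$ exactly and no mismatch arises. No probabilistic argument is needed beyond working on $\cE$.
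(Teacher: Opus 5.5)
Your proposal is correct and matches the paper's proof step for step. You unfold $r_b$, lower-bound the maximum of $\lcb_1$ by the $m_a/m$-weighted average, and use $\theta_*\in\cC_1$ with Cauchy--Schwarz to get the $2\sqrt{\betaoff}\norm{a}_{V_0^{-1}}$ penalty; you then finish with \cref{lem:norm_bound}. Your remark about fixing $\betaoff$ consistently between $\cC_1$ and $S$ is a fair point that the paper glosses over, but it does not affect the argument.
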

\begin{proof}
    We have
    \begin{align*}
        r_b = \max_{a \in \cA} \inf_{\theta \in \cC_1} \ip{\theta, a} - \alpha S &\ge \frac1m \sum_{a \in \cA} m_a \inf_{\theta \in \cC_1} \ip{\theta, a} - \alpha S \\
        &= \frac1m \sum_{a \in \cA} m_a \left( \inf_{\theta \in \cC_1} -\ip{\theta_* - \theta, a} + \ip{\theta_*, a} \right) - \alpha S \\
        &= \frac1m \sum_{a \in \cA} -m_a \sup_{\theta \in \cC_1} \ip{\theta_* - \theta, a} + \frac1m \sum_{a \in \cA} m_a \ip{\theta_*, a} - \alpha S \\
        &\ge -\frac1m \sum_{a \in \cA} m_a \sup_{\theta \in \cC_1} \norm{\theta - \theta_*}_{V_0} \norm{a}_{V_0^{-1}} + \muoff - \alpha S \\
        &\ge -\frac{2\sqrt{\betaoff}}{m} \sum_{a \in \cA} m_a \norm{a}_{V_0^{-1}} + \muoff - \alpha S \\
        &\ge - 2S + \muoff - \alpha S \\
        &= \muoff - (\alpha+2) S \,.
    \end{align*}
This establishes Lemma \ref{lem:lower_bound_on_baseline}.
\end{proof}

\subsection{Proof of \cref{thm:linoto-offreg}}
\begin{proof}
Let $\cU_t \coloneqq \{s \in [t] \,:\, A_s = U_s\}$ and $\cL_t \coloneqq \{s \in [t] \,:\, A_s = L_s\}$. Then
\begin{align*}
    R_n^{\text{off}}(\alg) &= \sum_{t=1}^n \left( \muoff - \ip{\theta_*, A_t}\right) \\
    &= \sum_{t \in \cU_n} \left( \muoff - \ip{\theta_*, A_t}\right) + \sum_{t \in \cL_n} \left( \muoff - \ip{\theta_*, A_t}\right) \\
    &= \sum_{t \in \cU_n} \left( \muoff - \ip{\theta_*, U_t}\right) + \sum_{t \in \cL_n} \left( \muoff - \ip{\theta_*, L_t}\right) \,.
\end{align*}
To bound the regret corresponding to the rounds when the UCB action was chosen, consider the ``realized'' budget at the end of round $n$, that is, the excess reward of the actions chosen by \alg till round $n$ over the baseline $r_b$:
\begin{align*}
    B_+(n) &\coloneqq \sum_{t=1}^n \lcb_n(A_t) - nr_b \\
    &= \sum_{t=1}^{n-1} \lcb_n(A_t) + \lcb_n(A_n) - nr_b.
\end{align*}
If $A_n = U_n$, then $B_+(n) = B(n) \ge 0$. If $A_n = L_n$, then
\begin{align*}
    B_+(n) &= \sum_{t=1}^{n-1} \lcb_n(A_t) + \lcb_n(L_n) - nr_b \\
    &\ge \sum_{t=1}^{n-1} \lcb_{n-1}(A_t) - (n-1)r_b + \lcb_n(L_n) - r_b \\
    &\ge B_+(n-1) + \lcb_n(\lcbaoff) - r_b \tag{since $L_n = \argmax_{a \in \cA} \lcb_n(a)$} \\
    &\ge B_+(n-1) + \lcb_1(\lcbaoff) - r_b \tag{since LCB values are non-decreasing} \\
    &= B_+(n-1) + \alpha S \,.
\end{align*}
Similarly, $B_+(n-1) \ge 0$ or $B_+(n-1) \ge B_+(n-2) + \alpha S$. Therefore, by iterating the above argument, we have $B_+(n) \ge 0$ or $B_+(n) \ge |\cL_n| \alpha S$. In either case, we have $B_+(n) \ge 0$. This implies
\begin{align*}
     \sum_{t \in \cU_n} \lcb_n(A_t) + \sum_{t \in \cL_n} \lcb_n(A_t) - nr_b \ge 0 \,.
\end{align*}
Rearranging and using the fact that $A_t = U_t$ if $t \in \cU_n$ and $A_t = L_t$ if $t \in \cL_n$, we get
\begin{align*}
    \sum_{t \in \cU_n} \lcb_n(U_t) &\ge \sum_{t \in \cL_n} \left( r_b - \lcb_n(L_t) \right) + \sum_{t \in \cU_n} r_b \\
    &\ge \sum_{t \in \cL_n} \left( \muoff - \lcb_n(L_t) - (\alpha+2)S \right) + \sum_{t \in \cU_n} r_b \\
    &\ge \sum_{t \in \cL_n} \left( \muoff - \ip{\theta_*, L_t} \right) + \sum_{t \in \cU_n} r_b - (\alpha+2)Sn \,.
\end{align*}
Using the previous display with the regret during UCB iterations, we have
\begin{align*}
    \sum_{t \in \cU_n} \left( \muoff - \ip{\theta_*, U_t}\right) &\le \sum_{t \in \cU_n} \left( \muoff - \lcb_n(U_t)\right) \\
    &\le \sum_{t \in \cU_n} \left( \muoff - r_b \right) + (\alpha+2)Sn - \sum_{t \in \cL_n} \left( \muoff - \ip{\theta_*, L_t} \right) \\
    &\le 2(\alpha+2)Sn - \sum_{t \in \cL_n} \left( \muoff - \ip{\theta_*, L_t} \right). \tag{by Lemma~\ref{lem:lower_bound_on_baseline}}
\end{align*}
Finally, the total $R_n^{\text{off}}(\alg)$ can be bounded as
\begin{align*}
    R_n^{\text{off}}(\alg) &= \sum_{t \in \cU_n} \left( \muoff - \ip{\theta_*, U_t}\right) + \sum_{t \in \cL_n} \left( \muoff - \ip{\theta_*, L_t}\right) \\
    &\le 2(\alpha+2)Sn \\
    &= 2(\alpha+2) \sqrt{\frac{d\betaoff}{m}} n \,.
\end{align*}
This completes the proof of \cref{thm:linoto-offreg}.
\end{proof}

\subsection{Proof of \Cref{prop:lcb-offreg}}
The proof follows a similar structure to the proof of \cref{prop:loff-subopt}.
\begin{proof}
When the event $\theta_* \in \cC_t$ for all $t \in [n]$ holds, we have that with probability $1 - \delta$ and for all $t \in [n]$,
\begin{align*}
    \mu_{L_t}= \ip{\theta_* , L_t} \ge \max_{t' \le t}\inf_{\theta \in \cC_{t'}} \ip{\theta, L_t} = \lcb_t(L_t) \,.
\end{align*}
Since $\max$ is always greater than the average, we have
\begin{align*}
    \lcb_t(L_t) = \max_{a \in \cA} \lcb_t(a) \ge \frac1m \sum_{a \in \cA} m_a \lcb_t(a) = \frac1m \sum_{a \in \cA} m_a \max_{t' \le t} \inf_{\theta \in \cC_{t'}} \ip{\theta, a} \,.
\end{align*} 
From the above two displays, we have
\begin{align*}
    \mu_{L_t} &\ge \frac1m \sum_{a \in \cA} m_a \max_{t' \le t} \inf_{\theta \in \cC_{t'}} \ip{\theta, a} \\
     &= \frac1m \sum_{a \in \cA} m_a \max_{t' \le t} \inf_{\theta \in \cC_{t'}} \left( \ip{\theta_*, a} - \ip{\theta_* - \theta, a} \right) \\
     &= \frac1m \sum_{a \in \cA} m_a \mu_a + \frac1m \sum_{a \in \cA} m_a \max_{t' \le t} \left(-\sup_{\theta \in \cC_{t'}} \ip{\theta_* - \theta, a} \right) \\
     &\ge \muoff + \frac1m \sum_{a \in \cA} m_a \max_{t' \le t}\left( -\sup_{\theta \in \cC_{t'}} \norm{\theta - \theta_*}_{V_{t'-1}} \norm{a}_{V_{t'-1}^{-1}} \right) \tag{Cauchy-Schwarz} \\
     &\ge \muoff + \frac1m \sum_{a \in \cA} m_a \max_{t' \le t}\left( - 2\sqrt{\rho_{t'}} \norm{a}_{V_{t'-1}^{-1}} \right)\tag{since $\sup_{\theta \in \cC_{t'}} \norm{\theta - \theta_*}_{V_{t'-1}} \le 2\sqrt{\rho_{t'}}$} \\
     &> \muoff - \frac{2 \sqrt{\rho_t}}{m} \sum_{a \in \cA} m_a \max_{t' \le t} \norm{a}_{V_{t'-1}^{-1}}  \tag{since $\rho_t > \rho_{t'}$ for all $t' \le t$}\\
     &= \muoff - \frac{2 \sqrt{\rho_t}}{m} \sum_{a \in \cA} m_a \norm{a}_{V_0^{-1}} \tag{since $V_{t'-1}^{-1} \preceq V_0^{-1}$ for all $t'$} \\
     &\ge \muoff - 2 \sqrt{\frac{d \rho_t}{m}} \tag{using \cref{lem:norm_bound}} \,.
\end{align*}
Rearranging, the $R_n^{\text{off}}(\lcb)$ is bounded as
\begin{align*}
    R_n^{\text{off}}(\lcb, \nu) &= \sum_{t=1}^n \left( \frac1m \sum_{a \in \cA} m_a \mu_a - \mu_{L_t} \right) \le  2\sqrt{\frac{d}{m}} \sum_{t=1}^n \sqrt{\rho_t} \le 2 n \sqrt{\frac{d \rho_n}{m}}. 
\end{align*}
This establishes Proposition \ref{prop:lcb-offreg}.
\end{proof}

\section{Additional Experimental Results}\label{app:more-expts}

\Cref{fig:d} shows the regret of LinUCB, LinLCB, \alg, and a random policy by varying the number of dimensions $d \in \{10, 20, 50\}$. The other problem parameters are set to the default values as described in \Cref{sec:experiments}. In \Cref{fig:alpha-sensitivity}, we plot the sensitivity of \alg with respect to the tuning parameter $\alpha$ (see \cref{eq:baseline_reward}) for different number of offline samples $m$.

\begin{figure}[!t]
    \centering
    \begin{subfigure}{\textwidth}
        \centering
    \includegraphics[width=\linewidth]{legend_only.pdf}
    \end{subfigure}

    \centering
    \begin{subfigure}{\textwidth}
        \centering
    \includegraphics[width=\linewidth]{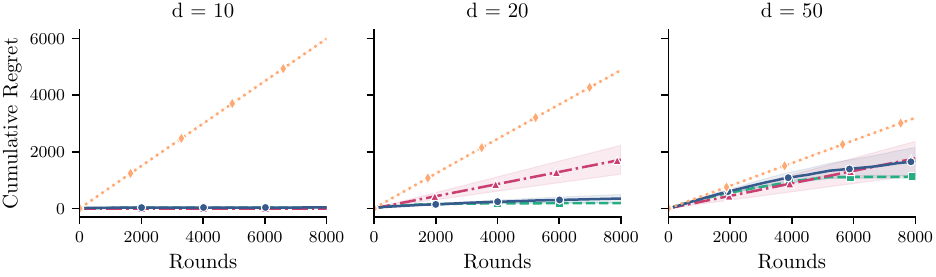}
    \end{subfigure}
    \caption{Cumulative regret with varying dimension $d$.}
    \label{fig:d}
\end{figure}

\begin{figure}[!t]
        \centering
    \includegraphics[width=\linewidth]{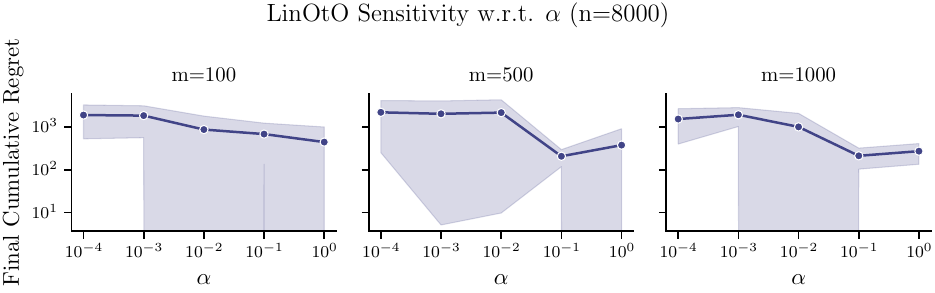}
    \caption{Sensitivity of LinOtO with respect to $\alpha$ for varying number of offline samples $m$. The vertical axis is on log scale.}
    \label{fig:alpha-sensitivity}
\end{figure}

\typeout{get arXiv to do 4 passes: Label(s) may have changed. Rerun}

\end{document}